\providecommand{\tabularnewline}{\\}
\newenvironment{cellvarwidth}[1][t]
    {\begin{varwidth}[#1]{\linewidth}}
    {\@finalstrut\@arstrutbox\end{varwidth}}
\newcommand{\lyxaddress}[1]{
	\par {\raggedright #1
	\vspace{1.4em}
	\noindent\par}
}
\theoremstyle{plain}
\newtheorem{thm}{\protect\theoremname}
\theoremstyle{plain}
\newtheorem{lem}[thm]{\protect\lemmaname}
\newenvironment{proof}[1][\protect\proofname]{\par
	\normalfont\topsep6\p@\@plus6\p@\relax
	\trivlist
	\itemindent\parindent
	\item[\hskip\labelsep\scshape #1]\ignorespaces
}{%
	\endtrivlist\@endpefalse
}
\providecommand{\proofname}{Proof}
\theoremstyle{definition}
\newtheorem{defn}[thm]{\protect\definitionname}
\theoremstyle{plain}
\newtheorem{cor}[thm]{\protect\corollaryname}
\providecommand{\corollaryname}{Corollary}
\providecommand{\definitionname}{Definition}
\providecommand{\lemmaname}{Lemma}
\providecommand{\theoremname}{Theorem}
\begin{document}
\title{Expressing linear equality constraints in feedforward neural networks}
\author{Anand Rangarajan, Pan He, Jaemoon Lee, Tania Banerjee and Sanjay Ranka}
\maketitle

\lyxaddress{\begin{center}
Dept. of CISE\\
University of Florida, Gainesville, FL
\par\end{center}}
\begin{abstract}
We seek to impose linear, equality constraints in feedforward neural
networks. As top layer predictors are usually nonlinear, this is a
difficult task if we wish to use standard convex optimization methods
and strong duality. To overcome this, we introduce a new saddle-point
Lagrangian with auxiliary predictor variables on which constraints
are imposed. Eliminating the auxiliary variables leads to a dual
minimization problem on the Lagrange multipliers introduced to satisfy
the linear constraints. This minimization problem is combined with
the standard learning problem on the weight matrices. From this theoretical
line of development, we obtain the surprising interpretation of Lagrange
parameters as additional, penultimate layer hidden units with fixed
weights stemming from the constraints. Consequently, standard minimization
approaches can be used despite the inclusion of Lagrange parameters---a
very satisfying, albeit unexpected, discovery. 
\end{abstract}

\section{Introduction}\label{sec:Introduction}

\begin{comment}
Resurgence and success of feedforward neural networks
\begin{itemize}
\item Autoencoders and generative models
\item Constraints: linear equality the focus
\item Difficulty of predictors satisfying constraints
\item Exceptions: softmax
\item Would be great to have predictors satisfy linear equality constraints
(usually satisfied by the target)
\item All action in the last layer
\item Strategy: Rewrite MC-LR as predictor satisfying linear constraint
\item Show that it emerges from a saddle-point Lagrangian
\item Build intuition: Show that difference of $\ell_{2}$ objectives satisfies
linear constraints but with linear output layer.
\item Generalize: Show that difference of $\ell_{2}$ objectives is a difference
of Bregman divergences
\item Show that difference of Bregman divergences leads to new saddle-point
Lagrangian
\item Eliminate the predictor, obtain dual objective function
\item Surprise: Lagrange parameters emerge as penultimate layer hidden units
with fixed weights
\item Show this works for different nonlinearities
\item Applications and examples
\end{itemize}
\end{comment}

The current resurgence of interest and excitement in neural networks,
mainly driven by high performance computing, has tended to shift the
research focus away from applied math development. While this has
enabled a dizzying array of practical applications and has firmly
brought artificial intelligence and machine learning into the mainstream,
fundamental issues remain. One such issue is the enforcement of constraints
in neural network information processing. By far the most common architecture
is the feedforward neural network with its implicit hidden units,
nonlinearities and divergence measures linking predictors to targets.
However, if we seek to impose mathematically specified constraints
on the predictors since the targets satisfy them, there is a paucity
in the literature to address this issue. A survey of the literature
brings out methods that softly impose constraints using penalty functions
and there is a smattering of work focused on constraint satisfaction,
but in the main, there is no work that makes the satisfaction of hard
equality constraints central to its mission. Addressing this lack
is the motivation behind the paper. 

The principal difficulty in imposing linear (or other) equality constraints
lies in the nonlinear nature of the predictor of the output. In a
standard feedforward network, the predictor is often a pointwise nonlinearity
applied to a linear transformation of the output of the penultimate
layer. Enforcing linear constraints on the predictor using standard
Lagrangian methods is not straightforward as they effectively result
in a set of nonlinear constraints on the top layer weights---difficult
therefore (at first glance) to cast using standard convex optimization
methods. Not taking advantage of convex optimization and duality would
be unfortunate given the richness (and simplicity) of the methods
at hand. On the other hand, the softmax nonlinearity is a clear cut
example of a hard (sum to one) constraint being enforced, usually
at the top layer to obtain one-of-$K$ multi-class soft memberships.
This seems to run counter to our argument that enforcing constraints
on nonlinear predictors is difficult. We therefore ask (and answer
in the affirmative) the following question: can this example be generalized
to other constraints?

We show that linear equality constraints can be expressed in feedforward
neural networks using standard convex optimization methods. To achieve
this, we introduce auxiliary variables (one set for each training
set instance) and express the constraints on them. It turns out that
we obtain a \emph{maximization} problem on the auxiliary variables
(which take the form of predictors) and even more crucially, we obtain
a convex dual \emph{minimization} problem on the Lagrange parameter
vectors upon elimination of the auxiliary set. Furthermore, if the
form of the output nonlinearity is carefully matched with the choice
of the output divergence measure, the Lagrange parameters can be interpreted
as additional penultimate layer hidden units which connect to the
output via fixed weights (depending on the constraints). This is a
crucial discovery since these additional variables can be coupled
by standard means to lower layers (finding application in compression
autoencoders for example). No saddle-point methods are needed and
while it may be necessary to explicitly solve for the Lagrange parameters
in certain applications, we can deploy standard stochastic gradient
descent methods here---a huge computational advantage. 

The development of the basic idea proceeds as follows. We begin with
the multi-class logistic regression (MC-LR) negative log-likelihood
(which can serve as the top layer) and note (as mentioned above) that
the softmax nonlinearity of the predictor satisfies a sum to one constraint.
Then, we use an Ansatz (an educated guess) for the form of the saddle-point
Lagrangian with auxiliary variables, which when eliminated results
in a dual minimization on the Lagrange parameter used to set up the
softmax one-of-$K$ membership constraint. Subsequently, we show that
a difference between squared $\ell_{2}$ norms can serve as a principle
for setting up a saddle-point Lagrangian with linear constraints---albeit
one with a linear output layer. Moving forward to discover a principle,
the next step in the technical development is crucial: we identify
the difference of squared $\ell_{2}$ norms as a special case of the
difference of \emph{Bregman} \emph{divergences }with a direct relationship
between the choice of the output nonlinearity and the form of the
Bregman divergence \cite{Bregman1967200}. Once this identification
has been made, the remainder of the development is entirely straightforward.
Auxiliary variables (arranged in the form of predictors) are eliminated
from the difference of Bregman divergences saddle-point Lagrangian
leaving behind a dual \emph{minimization} w.r.t. the Lagrange parameter
vectors while continuing to obtain a standard minimization problem
w.r.t. the network weights. Matching the nonlinearity to the choice
of the Bregman divergence is important and we further require the
output nonlinearity to be continuous, differentiable and monotonic.
Provided these conditions are met, we obtain penultimate layer hidden
unit semantics for the Lagrange parameters---a surprising (but obvious
in hindsight) result. 

\section{A summary of the constraint satisfaction approach}\label{sec:constraint_satisfaction}

We present a very brief summary of the basic result. Assume a feedforward
neural network wherein we have the situation that the targets $\left\{ \boldsymbol{y}_{n}\right\} _{n=1}^{N}$
with $\boldsymbol{y}_{n}\in\mathbb{R}^{K}$ satisfy the constraints
$A\boldsymbol{y}_{n}=\boldsymbol{b}_{n}$ where $\boldsymbol{b}_{n}\in\mathbb{R}^{I}$.
Here the matrix of constraint weights $A\equiv\left[\boldsymbol{a}_{1},\ldots,\boldsymbol{a}_{K}\right]$.
We would therefore like the nonlinear predictor from the top layer
of the network also satisfy (or approximately satisfy) these constraints.
Assume top layer weights $W\equiv\left[\boldsymbol{w}_{1},\ldots,\boldsymbol{w}_{K}\right]$
and \emph{penultimate} layer hidden units $\boldsymbol{x}\in\mathbb{R}^{J}$.
(It should be clear that $K>I$ and that usually $J>K$.) Then, consider
the following loss function of the neural network (written in terms
of the top layer weights and for penultimate layer hidden units and
for a single instance):

\begin{align}
\ell(W,\boldsymbol{\lambda}) & =-\boldsymbol{y}^{T}\left(W^{T}\boldsymbol{x}+A^{T}\boldsymbol{\lambda}\right)+\boldsymbol{e}^{T}\boldsymbol{\phi}(W^{T}\boldsymbol{x}+A^{T}\boldsymbol{\lambda})\label{eq:loss_func}\\
 & =\sum_{k=1}^{K}\left[-y_{k}\left(\boldsymbol{w}_{k}^{T}\boldsymbol{x}+\boldsymbol{a}_{k}^{T}\boldsymbol{\lambda}\right)+\phi\left(\boldsymbol{w}_{k}^{T}\boldsymbol{x}+\boldsymbol{a}_{k}^{T}\boldsymbol{\lambda}\right)\right]\nonumber 
\end{align}
where $\boldsymbol{e}$ is the vector of all ones, $\boldsymbol{\lambda}\in\mathbb{R}^{I}$
and $\boldsymbol{\phi}$ is a vector of convex, differentiable functions
$\boldsymbol{\phi}(\boldsymbol{u})\equiv\left[\phi(u_{1}),\ldots,\phi(u_{K})\right]^{T}$.
We assume the output nonlinearity corresponding to the nonlinear predictor
is pointwise and chosen to be $z=\sigma(u)\equiv\phi^{\prime}(u)$.

We now minimize the loss function w.r.t. $\boldsymbol{\lambda}$ in
(\ref{eq:loss_func}) by differentiating and setting the result to
the origin to get

\begin{equation}
\nabla_{\boldsymbol{\lambda}}\ell(W,\boldsymbol{\lambda})=-A\boldsymbol{y}+A\boldsymbol{z}=\boldsymbol{0}\Rightarrow-\boldsymbol{b}+A\boldsymbol{z}=\boldsymbol{0}
\end{equation}
where $\boldsymbol{z}=\left[\phi^{\prime}(\boldsymbol{w}_{1}^{T}\boldsymbol{x}+\boldsymbol{a}_{1}^{T}\boldsymbol{\lambda}),\ldots,\phi^{\prime}(\boldsymbol{w}_{K}^{T}\boldsymbol{x}+\boldsymbol{a}_{K}^{T}\boldsymbol{\lambda})\right]^{T}$.
This implies that the loss function in (\ref{eq:loss_func}) is an
objective function w.r.t. $\boldsymbol{\lambda}$ as well as the weights
of the feedforward neural network. Furthermore, there exists one $\boldsymbol{\lambda}$
vector per input instance and when we minimize the loss function w.r.t.
$\boldsymbol{\lambda}$, we obtain a nonlinear predictor \textbf{$\boldsymbol{z}$}
which satisfies the output constraints $A\boldsymbol{z}=\boldsymbol{b}$.
The price of admission therefore is an additional set of \textbf{$\boldsymbol{\lambda}$}
units (with fixed weights $A$) which need to be updated in each epoch.
In the rest of the paper, we develop the theory underlying the above
loss function.

\section{Formal development of saddle-point Lagrangians and their dual }\label{sec:Diff_Breg_Lag}

\subsection{The Multi-class Logistic Regression (MC-LR) Lagrangian}

The negative log-likelihood for multi-class logistic regression is

\begin{equation}
-\log\Pr(\boldsymbol{Y}=\boldsymbol{y}|W,\boldsymbol{x})=-\sum_{k=1}^{K}y_{k}\boldsymbol{w}_{k}^{T}\boldsymbol{x}+\log\sum_{k=1}^{K}\exp\left\{ \boldsymbol{w}_{k}^{T}\boldsymbol{x}\right\} \label{eq:MC-LR_NLL}
\end{equation}
where the weight matrix $W$ comprises a set of weight vectors $W=\left[\boldsymbol{w}_{1},\ldots,\boldsymbol{w}_{K}\right]$,
the class label column vector $\boldsymbol{y}=\left[y_{1},\ldots,y_{K}\right]^{T}$
and the log-likelihood is expressed for a generic input-output training
set pair $\left\{ \boldsymbol{x},\boldsymbol{y}\right\} $. 

Rather than immediately embark on a tedious formal exposition of the
basic idea, we try and provide intuition. We begin with an Ansatz
(an informed guess based on a result shown in \cite{yuille1994statistical})
that leads to (\ref{eq:MC-LR_NLL}) and then provide a justification
for that educated guess. Consider the following Lagrangian Ansatz
for multi-class logistic regression: 
\begin{align}
\mathcal{L}_{\mathrm{MC-LR}}^{(\mathrm{total})}(W,\left\{ \boldsymbol{z}_{n}\right\} ,\left\{ \lambda_{n}\right\} ) & =\sum_{n=1}^{N}\left[\left(\boldsymbol{z}_{n}-\boldsymbol{y}_{n}\right)^{T}W^{T}\boldsymbol{x}_{n}-\boldsymbol{e}^{T}\boldsymbol{\psi}(\boldsymbol{z}_{n})+\lambda_{n}\left(\boldsymbol{e}^{T}\boldsymbol{z}_{n}-1\right)\right]\nonumber \\
 & =\sum_{n=1}^{N}\Big[\left(z_{nk}-y_{nk}\right)\sum_{j}w_{jk}x_{nj}-\sum_{k}\psi(z_{nk})+\lambda_{n}\Big(\sum_{k}z_{nk}-1\Big)\Big]\label{eq:MC-LR_Lagrangian}
\end{align}
where $\boldsymbol{z}=\left[z_{1},\ldots,z_{K}\right]^{T}$ is the
auxiliary variable \emph{predictor} of the output, $\boldsymbol{\psi}(\boldsymbol{z})\equiv\left[\psi(z_{1}),\ldots,\psi(z_{K})\right]^{T}$
is the \emph{negative entropy} and $\boldsymbol{e}$ is the column
vector of all ones with

\begin{equation}
\psi(z)=z\log z-z.\label{eq:phi_zlogz}
\end{equation}
(We use the term negative entropy for $\psi$ since it returns the
negative of the Shannon entropy for the standard sigmoid.) The one-of-$K$
encoding constraint is expressed by a Lagrange parameter $\lambda$
(one per training set instance). In the context of the rest of the
paper, the specific nonlinearity (in place of the sigmoid nonlinearity)
used for MC-LR is $\sigma(u)\equiv\exp\left\{ u\right\} $, its indefinite
integral---the sigmoid integral---$\phi(u)\equiv\exp\left\{ u\right\} $,
the inverse sigmoid $u(z)\equiv\sigma^{-1}(z)=\left(\phi^{\prime}\right)^{-1}(z)=\log z$
and the negative entropy $\psi(z)\equiv zu(z)-\phi(u(z))=z\log z-z$
(effectively a Legendre transform of $\phi(u)$ \cite{legendre1787memoire,mjolsness1990algebraic}).
The notation used in (\ref{eq:MC-LR_Lagrangian}) and in the rest
of the paper is now described. 

\subsubsection*{Notation:}\label{subsec:Notation:}

\noindent$\sigma(u),\,$$\phi(u),\,$$u(z)$ and $\psi(z)$ are the
generalized sigmoid, sigmoid integral, inverse sigmoid and negative
entropy respectively and correspond to individual variables ($u$
and $z$ both in $\mathbb{R}).$ Please see Table~\ref{tab:Sigmoid-to-Bregman}
for examples. Boldfaced quantities always correspond to vectors and/or
vector functions as the case may be. For example, $\boldsymbol{\sigma}(\boldsymbol{u})$
is not a tensor and is instead unpacked as a vector of functions $\left[\sigma(u_{1}),\ldots,\sigma(u_{K})\right]^{T}$.
The same applies for $\boldsymbol{u}(\boldsymbol{z}),$~$\boldsymbol{\phi}(\boldsymbol{u}(\boldsymbol{z}))$
and $\boldsymbol{\psi}(\boldsymbol{z})$. Likewise, $\nabla\boldsymbol{\psi}(\boldsymbol{v})$
is not a gradient tensor but refers to the vector of derivatives $\left[\psi^{\prime}(v_{1}),\ldots,\psi^{\prime}(v_{K})\right]^{T}$.
$\boldsymbol{e}$ is the vector of all ones with implied dimensionality.
Similarly, $\nabla\boldsymbol{u}(\boldsymbol{z})$ is a vector of
derivatives $\left[u^{\prime}(z_{1}),\ldots,u^{\prime}(z_{K})\right]^{T}$
and $\boldsymbol{z}\odot\nabla\boldsymbol{u}(\boldsymbol{z})$ is
a vector with $\odot$ denoting component-wise multiplication, i.e.
$\left[z_{1}u^{\prime}(z_{1}),\ldots,z_{K}u^{\prime}(z_{K})\right]^{T}$.
In an identical fashion, $\nabla_{\boldsymbol{u}}\boldsymbol{\phi}(\boldsymbol{u}(\boldsymbol{z}))$
is a vector of derivatives $\left[\phi^{\prime}(u(z_{1})),\ldots,\phi^{\prime}(u(z_{K}))\right]^{T}$
and $\nabla\boldsymbol{\phi}(\boldsymbol{u}(\boldsymbol{z}))\odot\nabla\boldsymbol{u}(\boldsymbol{z})$
denotes the vector formed by component-wise multiplication $\left[\phi^{\prime}(u(z_{1}))u^{\prime}(z_{1}),\ldots,\phi^{\prime}(u(z_{K}))u^{\prime}(z_{K})\right]^{T}$.
The Bregman divergence $B(\boldsymbol{z}||\boldsymbol{v})\equiv\sum_{k=1}^{K}B(z_{k}||v_{k})$---the
sum of component-wise Bregman divergences. Uppercase variables (such
as $W,\,A$) denote matrices with entries $\left\{ w_{kj}\right\} $
and $\{a_{ki}\}$ respectively and $O$ denotes the origin. Lagrangian
notation $\mathcal{L}$ is strictly reserved for (optimization or
saddle-point) problems \emph{with constraints}. Individual training
set instance pairs $(\boldsymbol{x},\boldsymbol{y})$ are often referred
to with the instance subscript index $n$ suppressed. 
\begin{center}
\begin{table}
\centering
\begin{tabular}{|c|c|c|c|c|c|}
\hline 
\multicolumn{2}{|c|}{Generalized sigmoid $\sigma(u)$} & \multirow{2}{*}{\begin{cellvarwidth}[t]
\centering
Sigmoid integral\\
 $\phi(u)$
\end{cellvarwidth}} & \multirow{2}{*}{\begin{cellvarwidth}[t]
\centering
Inverse\\
sigmoid $u(z)$
\end{cellvarwidth}} & \multirow{2}{*}{\begin{cellvarwidth}[t]
\centering
Negative entropy \\
$\psi(z)$
\end{cellvarwidth}} & \multirow{2}{*}{\begin{cellvarwidth}[t]
\centering
Bregman divergence \\
$B(z||v)$
\end{cellvarwidth}}\tabularnewline
\cline{1-2}
Name & Expression &  &  &  & \tabularnewline
\hline 
exp & $\exp\left\{ u\right\} $ & $\exp\left\{ u\right\} $ & $\log z$ & $z\log z-z$ & $z\log\frac{z}{v}-z+v$\tabularnewline
\hline 
\multirow{1}{*}{sig} & \multirow{1}{*}{$\frac{1}{1+\exp\left\{ -u\right\} }$} & \multirow{1}{*}{$\log\left(1+\exp\left\{ u\right\} \right)$} & \multirow{1}{*}{$\log\frac{z}{1-z}$} & \begin{cellvarwidth}[t]
\centering
$z\log z+$\\
$(1-z)\log(1-z)$
\end{cellvarwidth} & \begin{cellvarwidth}[t]
\centering
$z\log\frac{z}{v}+$\\
$(1-z)\log\frac{1-z}{1-v}$
\end{cellvarwidth}\tabularnewline
\hline 
tanh  & $\tanh u$ & $\log\cosh u$ & $\frac{1}{2}\log\frac{1+z}{1-z}$ & \begin{cellvarwidth}[t]
\centering
$\frac{1+z}{2}\log(1+z)+$\\
$\frac{1-z}{2}\log(1-z)$
\end{cellvarwidth} & \begin{cellvarwidth}[t]
\centering
$\frac{1+z}{2}\log\frac{1+z}{1+v}+$\\
$\frac{1-z}{2}\log\frac{1-z}{1-v}$
\end{cellvarwidth}\tabularnewline
\hline 
SRLU  & $\frac{au\exp\left\{ \frac{au^{2}}{2}\right\} +bu\exp\left\{ \frac{bu^{2}}{2}\right\} }{\exp\left\{ \frac{au^{2}}{2}\right\} +\exp\left\{ \frac{bu^{2}}{2}\right\} }$ & \begin{cellvarwidth}[t]
\centering
$\log\Big(\exp\left\{ \frac{au^{2}}{2}\right\} $\\
$+\exp\left\{ \frac{bu^{2}}{2}\right\} \Big)$
\end{cellvarwidth} & -\textsuperscript{1} & - & -\tabularnewline
\hline 
$\ell_{2}$ & $u$ & $\frac{u^{2}}{2}$ & $z$ & $\frac{z^{2}}{2}$ & $\frac{1}{2}\left(z-v\right)^{2}$\tabularnewline
\hline 
\end{tabular}\caption{Generalized sigmoid to Bregman divergence table. SRLU stands for soft
rectified linear unit. (\protect\textsuperscript{1}$u(z)$ for the
choice $\sigma(u)=\frac{au\exp\left\{ \frac{au^{2}}{2}\right\} +bu\exp\left\{ \frac{bu^{2}}{2}\right\} }{\exp\left\{ \frac{au^{2}}{2}\right\} +\exp\left\{ \frac{bu^{2}}{2}\right\} }$
is left implicit since the inverse is not available in closed form.
The same applies for the negative entropy and the Bregman divergence.)}\label{tab:Sigmoid-to-Bregman}
\end{table}
\par\end{center}

The Lagrangian in (\ref{eq:MC-LR_Lagrangian}) expresses a \emph{saddle-point}
problem---a minimization problem on $W$ and a maximization problem
on the predictors $\left\{ \boldsymbol{z}_{n}\right\} $. We will
show that the Lagrangian is concave w.r.t. $\left\{ \boldsymbol{z}_{n}\right\} $
leading to a convex dual minimization on $\left\{ \lambda_{n}\right\} $.
It turns out to be more convenient to work with a single instance
Lagrangian rather than (\ref{eq:MC-LR_Lagrangian}) which is on the
entire training set. This is written as

\begin{equation}
\mathcal{L}_{\mathrm{MC-LR}}(W,\boldsymbol{z},\lambda)=\left(\boldsymbol{z}-\boldsymbol{y}\right)^{T}W^{T}\boldsymbol{x}-\boldsymbol{e}^{T}\boldsymbol{\psi}(\boldsymbol{z})+\lambda\left(\boldsymbol{e}^{T}\boldsymbol{z}-1\right).\label{eq:MC-LR_Lagrangian_single}
\end{equation}

\begin{lem}
\label{lem:Lag_MC-LR_zlamsol}Eliminating $\boldsymbol{z}$ via maximization
in (\ref{eq:MC-LR_Lagrangian_single}) and solving for the Lagrange
parameter $\lambda$ via minimization yield the multi-class logistic
regression negative log-likelihood in (\ref{eq:MC-LR_NLL}).
\end{lem}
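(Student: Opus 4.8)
The plan is to carry out the two stationarity computations named in the statement in sequence, verifying at each stage that the stationary point is the claimed extremum, and then to read off the reduced objective. First I would eliminate $\boldsymbol{z}$. Differentiating (\ref{eq:MC-LR_Lagrangian_single}) componentwise and using $\psi'(z)=\log z$ (which follows from (\ref{eq:phi_zlogz})), the condition $\partial\mathcal{L}_{\mathrm{MC-LR}}/\partial z_k=0$ reads $\boldsymbol{w}_k^T\boldsymbol{x}-\log z_k+\lambda=0$, so $z_k=\exp\{\boldsymbol{w}_k^T\boldsymbol{x}+\lambda\}$. Because $\psi''(z)=1/z>0$ on $z>0$, the $\boldsymbol{z}$-Hessian is $-\mathrm{diag}(1/z_k)$, which is negative definite; hence the Lagrangian is strictly concave in $\boldsymbol{z}$ and this stationary point is the unique maximizer (the exponential form automatically keeps $\boldsymbol{z}$ in the positive orthant on which $\boldsymbol{\psi}$ is defined).

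Next I would substitute $z_k=\exp\{\boldsymbol{w}_k^T\boldsymbol{x}+\lambda\}$ back into (\ref{eq:MC-LR_Lagrangian_single}). Writing $u_k\equiv\boldsymbol{w}_k^T\boldsymbol{x}+\lambda$ so that $\psi(z_k)=z_k u_k-z_k$, the terms $\sum_k z_k\boldsymbol{w}_k^T\boldsymbol{x}$ and the $\lambda$-weighted terms $\lambda\sum_k z_k$ cancel in pairs, leaving the reduced objective
\begin{equation*}
g(W,\lambda)=-\sum_{k=1}^K y_k\boldsymbol{w}_k^T\boldsymbol{x}+e^{\lambda}\sum_{k=1}^K\exp\{\boldsymbol{w}_k^T\boldsymbol{x}\}-\lambda.
\end{equation*}
This cancellation is the one genuinely bookkeeping-heavy step, and it is where I would be most careful about signs.

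I would then minimize $g$ over $\lambda$. Since $\partial^2 g/\partial\lambda^2=e^{\lambda}\sum_k\exp\{\boldsymbol{w}_k^T\boldsymbol{x}\}>0$, $g$ is strictly convex in $\lambda$, so the condition $\partial g/\partial\lambda=e^{\lambda}\sum_k\exp\{\boldsymbol{w}_k^T\boldsymbol{x}\}-1=0$ gives the global minimizer $\lambda^\star=-\log\sum_k\exp\{\boldsymbol{w}_k^T\boldsymbol{x}\}$. Note this minimizer forces $\sum_k z_k=e^{\lambda^\star}\sum_k\exp\{\boldsymbol{w}_k^T\boldsymbol{x}\}=1$, i.e. it is precisely the multiplier that enforces the one-of-$K$ constraint, consistent with strong duality for the concave, linearly-equality-constrained inner maximization. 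Substituting $\lambda^\star$ back (using $e^{\lambda^\star}\sum_k\exp\{\boldsymbol{w}_k^T\boldsymbol{x}\}=1$ and $-\lambda^\star=\log\sum_k\exp\{\boldsymbol{w}_k^T\boldsymbol{x}\}$) collapses $g$ to $-\sum_k y_k\boldsymbol{w}_k^T\boldsymbol{x}+\log\sum_k\exp\{\boldsymbol{w}_k^T\boldsymbol{x}\}+1$, which is exactly (\ref{eq:MC-LR_NLL}) up to the additive constant $1$.

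I do not anticipate a serious obstacle: the argument is two applications of elementary calculus glued together by the observation that maximizing out $\boldsymbol{z}$ and then minimizing out $\lambda$ is the standard dual of a concave maximization under a linear equality constraint. The only points needing care are (i) confirming that the stationary points really are a maximum in $\boldsymbol{z}$ and a minimum in $\lambda$, which the definiteness checks above supply, and (ii) the harmless additive constant $1$ produced by the $-z$ term in the negative entropy $\psi(z)=z\log z-z$; I would simply note that it does not affect the minimization over $W$ (equivalently, one could absorb it by a constant shift of $\lambda$).
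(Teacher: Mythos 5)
Your proposal is correct and takes essentially the same route as the paper's own proof: stationarity in $\boldsymbol{z}$ yields $z_k=\exp\{\boldsymbol{w}_k^T\boldsymbol{x}+\lambda\}$, substitution gives the dual $-\sum_{k}y_k\boldsymbol{w}_k^T\boldsymbol{x}-\lambda+\sum_{k}\exp\{\boldsymbol{w}_k^T\boldsymbol{x}+\lambda\}$, and minimizing over $\lambda$ gives $\lambda^{\star}=-\log\sum_k\exp\{\boldsymbol{w}_k^T\boldsymbol{x}\}$ and recovers the negative log-likelihood up to an additive constant. Your explicit second-order checks and the identification of the constant as $1$ simply make precise what the paper handles via its concavity and strong-duality remarks.
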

\begin{proof}
Since $\boldsymbol{\psi}(\boldsymbol{z})$ is continuous and differentiable,
we differentiate and solve for $\boldsymbol{z}$ in (\ref{eq:MC-LR_Lagrangian_single}).
Note that (\ref{eq:MC-LR_Lagrangian_single}) contains a sum of linear
and concave (since $\psi(z)=z\log z-z$ is convex) terms on $\boldsymbol{z}$
and is therefore concave. The negative of the Lagrangian is convex
and differentiable w.r.t. \textbf{$\boldsymbol{z}$} and with a linear
constraint on $\boldsymbol{z}$, we are guaranteed that the dual is
convex (and without any duality gap) \cite{boyd2004convex,slater2014lagrange}.
We obtain
\begin{equation}
\log z_{k}=\boldsymbol{w}_{k}^{T}\boldsymbol{x}+\lambda\Rightarrow z_{k}=\exp\left\{ \boldsymbol{w}_{k}^{T}\boldsymbol{x}+\lambda\right\} .
\end{equation}
(Note that the predictor $z_{k}$ has a nonlinear dependence on both
$W$ and $\lambda$ through the choice of sigmoid: $\exp$.) We eliminate
$\boldsymbol{z}$ to obtain the dual w.r.t. $\lambda$ written as

\begin{align}
R_{\mathrm{MC-LR}}(W,\lambda) & =-\sum_{k=1}^{K}y_{k}\boldsymbol{w}_{k}^{T}\boldsymbol{x}-\lambda+\sum_{k=1}^{K}\exp\left\{ \boldsymbol{w}_{k}^{T}\boldsymbol{x}+\lambda\right\} \nonumber \\
 & =-\sum_{k=1}^{K}y_{k}\boldsymbol{w}_{k}^{T}\boldsymbol{x}-\lambda+\sum_{k=1}^{K}\phi(\boldsymbol{w}_{k}^{T}\boldsymbol{x}+\lambda)\nonumber \\
 & =-\sum_{k=1}^{K}y_{k}\left(\boldsymbol{w}_{k}^{T}\boldsymbol{x}+\lambda\right)+\sum_{k=1}^{K}\phi(\boldsymbol{w}_{k}^{T}\boldsymbol{x}+\lambda)\label{eq:MC-LR_lambda}
\end{align}
where we have used the fact that $\sum_{k=1}^{K}y_{k}=1$. Also $\phi(u)=\exp\left\{ u\right\} $
is the sigmoid integral corresponding to the choice of sigmoid $z(u)=\exp\left\{ u\right\} $.
Please see section~\ref{subsec:Notation:} and Table~\ref{tab:Sigmoid-to-Bregman}
for more information on the notation used. It should be understood
that we continue to have a minimization problem on $W$ but now augmented
with a convex minimization problem on $\lambda$. Differentiating
and solving for $\lambda$ in (\ref{eq:MC-LR_lambda}), we get

\begin{equation}
-1+\sum_{k=1}^{K}\exp\left\{ \boldsymbol{w}_{k}^{T}\boldsymbol{x}+\lambda\right\} =0\Rightarrow\lambda=-\log\sum_{k=1}^{K}\exp\left\{ \boldsymbol{w}_{k}^{T}\boldsymbol{x}\right\} .
\end{equation}
Substituting this solution for $\lambda$ in (\ref{eq:MC-LR_lambda}),
we get

\begin{equation}
R_{\mathrm{MC-LR}}(W,\lambda(W))=-\sum_{k=1}^{K}y_{k}\boldsymbol{w}_{k}^{T}\boldsymbol{x}+\log\sum_{k=1}^{K}\exp\left\{ \boldsymbol{w}_{k}^{T}\boldsymbol{x}\right\} +\mathrm{constant}
\end{equation}
which is identical (up to additive constant terms) to (\ref{eq:MC-LR_NLL}). 
\end{proof}
We can write the dual (with the optimization on $W$) as a new MC-LR
objective function:

\begin{equation}
E_{\mathrm{MC-LR}}^{(\mathrm{total})}(W,\left\{ \lambda_{n}\right\} )=\sum_{n=1}^{N}\left[-\boldsymbol{y}_{n}^{T}W^{T}\boldsymbol{x}_{n}-\lambda_{n}+\boldsymbol{e}^{T}\boldsymbol{\phi}\left(W^{T}\boldsymbol{x}_{n}+\lambda_{n}\boldsymbol{e}\right)\right].\label{eq:E_MC-LR_total}
\end{equation}
It should be clear from Lemma~\ref{lem:Lag_MC-LR_zlamsol} (that
this is a \emph{minimization} problem on $(W,\left\{ \lambda_{n}\right\} )$
despite the presence of the set of Lagrange parameters $\left\{ \lambda_{n}\right\} $.
The structure of the objective function strongly indicates that the
Lagrange parameter $\lambda_{n}$ is almost on the same footing as
the set of penultimate layer hidden units $\boldsymbol{x}_{n}$: it
appears inside the nonlinearity in the form of a new unit with a fixed
weight of one and also connects to the loss in the same manner. Noting
that $\boldsymbol{e}^{T}\boldsymbol{y}_{n}=1,$ we rewrite (\ref{eq:E_MC-LR_total})
as

\begin{equation}
E_{\mathrm{MC-LR}}^{(\mathrm{total})}(W,\left\{ \lambda_{n}\right\} )=\sum_{n=1}^{N}\left[-\boldsymbol{y}_{n}^{T}\left(W^{T}\boldsymbol{x}_{n}+\lambda_{n}\boldsymbol{e}\right)+\boldsymbol{e}^{T}\boldsymbol{\phi}\left(W^{T}\boldsymbol{x}_{n}+\lambda_{n}\boldsymbol{e}\right)\right]
\end{equation}
which is similar to (\ref{eq:loss_func}) but with the specialization
to multi-class classification. This simple observation serves as an
\emph{underpinning} for the entire paper. 

\subsection{Difference of squared $\ell_{2}$ norms Lagrangian}

We now embark on clarifying the intuition that led to the Ansatz in
(\ref{eq:MC-LR_Lagrangian}). Since divergence measures are closely
coupled to nonlinearities in neural networks, it can be quite difficult
to guess at the right Lagrangian which allows us to move from a saddle-point
problem to a dual minimization (on weight matrices and Lagrange parameters).
To (initially) remove nonlinearities completely from the picture,
consider the following saddle-point Lagrangian with a minimization
problem on $W$, a maximization problem on $\boldsymbol{z}$ and the
constraint $A\boldsymbol{z}=\boldsymbol{b}$ for a single training
set pair $(\boldsymbol{x},\boldsymbol{y}).$ 

\begin{equation}
\mathcal{L}_{\ell_{2}}(W,\boldsymbol{z},\boldsymbol{\lambda})=\frac{1}{2}\|\boldsymbol{y}-W^{T}\boldsymbol{x}\|_{2}^{2}-\frac{1}{2}\|\boldsymbol{z}-W^{T}\boldsymbol{x}\|_{2}^{2}+\boldsymbol{\lambda}^{T}(A\boldsymbol{z}-\boldsymbol{b}).\label{eq:L2_Lagrangian_single}
\end{equation}
Without the constraints, the predictor attains its maximum for $\boldsymbol{z}=W^{T}\boldsymbol{x}$,
the optimal solution and we recover the original unconstrained learning
problem on the weights. With the constraints, the maximization problem
is not (usually) allowed to reach its unconstrained value with the
predictor likewise not allowed to reach $W^{T}\boldsymbol{x}$ due
to constraint satisfaction. With the constrained maximization problem
nested inside the minimization problem, the intuition here is that
the maximization problem leads to a predictor $\boldsymbol{z}$ which
depends only on the input $\boldsymbol{x}$ (and not directly on $\boldsymbol{y}$)
and satisfies the constraints $(A\boldsymbol{z}=\boldsymbol{b})$.
Furthermore, elimination of $\boldsymbol{z}$ leads to a dual minimization
problem on $\boldsymbol{\lambda}$ which can be combined with the
minimization problem on $W$ in exactly the same way as in MC-LR above.
After this elimination step, $W$ is tuned to best fit to the ``labels''
$\boldsymbol{y}$ (but with a regularization term obtained from the
maximization problem, one that penalizes deviation from the constraints).
By this mechanism, the weight gradient of the objective function gets
projected into the subspace spanned by the constraints. Consequently,
once initial conditions are set up to satisfy the constraints, the
network cannot subsequently deviate from them. Since each training
set instance $(\boldsymbol{x},\boldsymbol{y})$ is deemed to satisfy
$A\boldsymbol{y}=\boldsymbol{b}$, we can separately evaluate the
gradient term for each training set pair for constraint satisfaction. 
\begin{lem}
If $A\boldsymbol{y}=\boldsymbol{b}$, then A$\nabla_{W}\mathcal{\mathcal{L}}_{\ell_{2}}(W,\boldsymbol{z}(W),\boldsymbol{\lambda}(W))=O,$
or in other words the gradient w.r.t. $W$ of the Lagrangian in (\ref{eq:L2_Lagrangian_single})
with $\boldsymbol{z}$ and $\boldsymbol{\lambda}$ set to their optimal
values is in the nullspace of $A$.
\end{lem}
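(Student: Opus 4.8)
The plan is to reduce everything to a single, structurally transparent fact: that the optimal predictor $\boldsymbol{z}(W)$ and the target $\boldsymbol{y}$ both satisfy the same linear constraint, so their difference lies in the nullspace of $A$. First I would carry out the inner constrained maximization over $\boldsymbol{z}$. Since the only $\boldsymbol{z}$-dependence in (\ref{eq:L2_Lagrangian_single}) is the strictly concave term $-\tfrac12\|\boldsymbol{z}-W^T\boldsymbol{x}\|_2^2$ together with the linear constraint term, setting $\nabla_{\boldsymbol{z}}\mathcal{L}_{\ell_2}=\boldsymbol{0}$ gives $\boldsymbol{z}=W^T\boldsymbol{x}+A^T\boldsymbol{\lambda}$, the unique maximizer. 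Imposing $A\boldsymbol{z}=\boldsymbol{b}$ (equivalently $\partial_{\boldsymbol{\lambda}}\mathcal{L}_{\ell_2}=\boldsymbol{0}$) and assuming $A$ has full row rank so that $AA^T$ is invertible, I would solve $\boldsymbol{\lambda}(W)=(AA^T)^{-1}(\boldsymbol{b}-AW^T\boldsymbol{x})$ and hence $\boldsymbol{z}(W)=PW^T\boldsymbol{x}+Q\boldsymbol{b}$ with $P\equiv I-A^T(AA^T)^{-1}A$ and $Q\equiv A^T(AA^T)^{-1}$. The only property of this solution I actually need is $A\boldsymbol{z}(W)=\boldsymbol{b}$, which is immediate from $AP=O$ and $AQ=I$ (and in fact follows purely from feasibility, since $A\boldsymbol{y}=\boldsymbol{b}$ places $\boldsymbol{b}$ in the range of $A$).

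Next I would compute the $W$-gradient. Differentiating the two squared norms componentwise with respect to $\boldsymbol{w}_k$, the quadratic term $(\boldsymbol{w}_k^T\boldsymbol{x})^2$ appears with opposite signs in the two norms and cancels before differentiation, leaving $\nabla_{\boldsymbol{w}_k}\mathcal{L}_{\ell_2}=(z_k-y_k)\boldsymbol{x}$; that is, the partial gradient (holding $\boldsymbol{z},\boldsymbol{\lambda}$ fixed) is the rank-one outer product $\boldsymbol{x}(\boldsymbol{z}-\boldsymbol{y})^T$. This clean cancellation is the whole point of the difference-of-$\ell_2$ construction. To pass from this partial gradient to the total gradient of $\mathcal{L}_{\ell_2}(W,\boldsymbol{z}(W),\boldsymbol{\lambda}(W))$ I would invoke the envelope theorem: by the chain rule the extra contributions are $\partial_{\boldsymbol{z}}\mathcal{L}_{\ell_2}\,d\boldsymbol{z}/dW+\partial_{\boldsymbol{\lambda}}\mathcal{L}_{\ell_2}\,d\boldsymbol{\lambda}/dW$, and both $\partial_{\boldsymbol{z}}\mathcal{L}_{\ell_2}$ and $\partial_{\boldsymbol{\lambda}}\mathcal{L}_{\ell_2}=A\boldsymbol{z}-\boldsymbol{b}$ vanish at the stationary $(\boldsymbol{z}(W),\boldsymbol{\lambda}(W))$, so the total and partial $W$-gradients coincide and equal $\boldsymbol{x}(\boldsymbol{z}(W)-\boldsymbol{y})^T$.

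Finally I would contract the output (class) index of this gradient with $A$. Because $\nabla_W\mathcal{L}_{\ell_2}$ is the outer product $\boldsymbol{x}(\boldsymbol{z}(W)-\boldsymbol{y})^T$ and $A\boldsymbol{z}(W)=\boldsymbol{b}=A\boldsymbol{y}$ by hypothesis, we obtain $A(\boldsymbol{z}(W)-\boldsymbol{y})=\boldsymbol{0}$, and applying $A$ on that index annihilates the entire gradient, yielding the claimed $O$. I expect the main obstacle to be bookkeeping rather than mathematics: the gradient is a $J\times K$ matrix whereas $A$ is $I\times K$, so the expression ``$A\nabla_W\mathcal{L}_{\ell_2}$'' must be read as contraction over the shared $K$-index (equivalently $\nabla_W\mathcal{L}_{\ell_2}\,A^T=O$), and I would state this convention explicitly to avoid a dimension clash. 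A secondary point to handle with care is the envelope step: one must confirm that $\boldsymbol{z}(W)$ and $\boldsymbol{\lambda}(W)$ are genuinely stationary (the concavity in $\boldsymbol{z}$ and feasibility from $A\boldsymbol{y}=\boldsymbol{b}$ guarantee this) before discarding the indirect derivative terms.
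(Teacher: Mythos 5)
Your proof is correct, and it reaches the paper's conclusion by a genuinely different route. The paper performs the elimination in full: it substitutes $\boldsymbol{z}=W^{T}\boldsymbol{x}+A^{T}\boldsymbol{\lambda}$ into (\ref{eq:L2_Lagrangian_single}) to get the dual $R_{\ell_{2}}(W,\boldsymbol{\lambda})$, solves $\boldsymbol{\lambda}(W)=(AA^{T})^{-1}(\boldsymbol{b}-AW^{T}\boldsymbol{x})$, substitutes again to obtain the fully reduced objective $R_{\ell_{2}}(W,\boldsymbol{\lambda}(W))\propto-\boldsymbol{y}^{T}W^{T}\boldsymbol{x}+\boldsymbol{x}^{T}WQ\boldsymbol{b}+\frac{1}{2}\boldsymbol{x}^{T}WPW^{T}\boldsymbol{x}$, differentiates \emph{that}, and finishes with the projection identities $AP=O$, $AQ=I$. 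You instead differentiate the Lagrangian in $W$ directly (exploiting the cancellation of the quadratic $\boldsymbol{x}^{T}WW^{T}\boldsymbol{x}$ terms to get the partial gradient $\boldsymbol{x}(\boldsymbol{z}-\boldsymbol{y})^{T}$), invoke the envelope theorem to discard the indirect terms $\partial_{\boldsymbol{z}}\mathcal{L}\,d\boldsymbol{z}/dW$ and $\partial_{\boldsymbol{\lambda}}\mathcal{L}\,d\boldsymbol{\lambda}/dW$ at the stationary point, and then use only feasibility, $A\boldsymbol{z}(W)=\boldsymbol{b}=A\boldsymbol{y}$. The two arguments meet at the same final identity --- the paper's $-A\boldsymbol{y}+AQ\boldsymbol{b}+APW^{T}\boldsymbol{x}$ is exactly your $A(\boldsymbol{z}(W)-\boldsymbol{y})$ since $AQ\boldsymbol{b}+APW^{T}\boldsymbol{x}=A\boldsymbol{z}(W)$ --- but yours is shorter, avoids the $P,Q$ bookkeeping, and (a real advantage) carries over verbatim to the Bregman-divergence Lagrangian of Section~\ref{sec:Diff_Breg_Lag}, where $\boldsymbol{\lambda}(W)$ has no closed form and the paper's explicit-substitution strategy is unavailable. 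What the paper's longer computation buys in exchange is the explicit reduced objective, which exhibits boundedness below via the $\frac{1}{2}\boldsymbol{x}^{T}WPW^{T}\boldsymbol{x}$ term and the projection-matrix structure behind the ``gradient projected into the constraint subspace'' intuition the authors emphasize. Your two flagged caveats are both apt: the envelope step is legitimate here because $\boldsymbol{z}(W)$ and $\boldsymbol{\lambda}(W)$ are linear (hence smooth) in $W$ and genuinely stationary (strict concavity in $\boldsymbol{z}$, plus $AA^{T}$ invertible, which the paper also assumes); and the dimension convention matters, since your $\nabla_{W}\mathcal{L}$ is $J\times K$ while the paper writes the gradient with the $K$-index first, so ``$A\nabla_{W}\mathcal{L}=O$'' must indeed be read as contraction over the shared $K$-index, exactly as you state.
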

\begin{proof}
Since the Lagrangian in (\ref{eq:L2_Lagrangian_single}) is quadratic
w.r.t. $\boldsymbol{z}$, we differentiate and solve to get

\begin{equation}
\boldsymbol{z}=W^{T}\boldsymbol{x}+A^{T}\boldsymbol{\lambda}.\label{eq:zl2sol}
\end{equation}
Substituting this solution for $\boldsymbol{z}$ in (\ref{eq:L2_Lagrangian_single}),
we obtain the dual objective function (while continuing to have a
minimization problem on $W$): 

\begin{align}
R_{\ell_{2}}(W,\boldsymbol{\lambda}) & =\frac{1}{2}\|\boldsymbol{y}-W^{T}\boldsymbol{x}\|_{2}^{2}+\frac{1}{2}\boldsymbol{\lambda}^{T}AA^{T}\boldsymbol{\lambda}+\boldsymbol{\lambda}^{T}AW^{T}\boldsymbol{x}-\boldsymbol{\lambda}^{T}\boldsymbol{b}\nonumber \\
 & \propto-\boldsymbol{y}^{T}W^{T}\boldsymbol{x}-\boldsymbol{b}^{T}\boldsymbol{\lambda}+\frac{1}{2}\boldsymbol{x}^{T}WW^{T}\boldsymbol{x}+\boldsymbol{x}^{T}WA^{T}\boldsymbol{\lambda}+\frac{1}{2}\boldsymbol{\lambda}^{T}AA^{T}\boldsymbol{\lambda}\nonumber \\
 & =-\boldsymbol{y}^{T}W^{T}\boldsymbol{x}-\boldsymbol{b}^{T}\boldsymbol{\lambda}+\frac{1}{2}\|W^{T}\boldsymbol{x}+A^{T}\boldsymbol{\lambda}\|_{2}^{2}.\label{eq:RL2_lambda}
\end{align}
This objective function is similar to the previous one obtained for
MC-LR in (\ref{eq:MC-LR_lambda}) and to help see the intuition, we
rewrite it as

\begin{align}
S_{\ell_{2}}(W,\boldsymbol{z},\boldsymbol{\lambda}) & =-\boldsymbol{y}^{T}W^{T}\boldsymbol{x}-\boldsymbol{y}^{T}A^{T}\boldsymbol{\lambda}+\boldsymbol{y}^{T}A^{T}\boldsymbol{\lambda}-\boldsymbol{b}^{T}\boldsymbol{\lambda}+\frac{1}{2}\|W^{T}\boldsymbol{x}+A^{T}\boldsymbol{\lambda}\|_{2}^{2}\nonumber \\
 & =-\boldsymbol{y}^{T}\boldsymbol{z}+\frac{1}{2}\boldsymbol{z}^{T}\boldsymbol{z}+\boldsymbol{\lambda}^{T}(A\boldsymbol{y}-\boldsymbol{b})\,\,\,\,\,\,\,\,\,\,\,\,(\mathrm{since}\,\boldsymbol{z}=W^{T}\boldsymbol{x}+A^{T}\boldsymbol{\lambda})\nonumber \\
 & =-\boldsymbol{y}^{T}\boldsymbol{z}+\frac{1}{2}\boldsymbol{z}^{T}\boldsymbol{z}
\end{align}
since $A\boldsymbol{y}=\boldsymbol{b}$ and with $\boldsymbol{z}$
as in (\ref{eq:zl2sol}). This clearly shows the constrained predictor
$\boldsymbol{z}=W^{T}\boldsymbol{x}+A^{T}\boldsymbol{\lambda}$ attempting
to fit to the labels $\boldsymbol{y}$. 

We now enforce the constraint $A\boldsymbol{z}=\boldsymbol{b}$ in
(\ref{eq:zl2sol}) to get

\begin{equation}
AW^{T}\boldsymbol{x}+AA^{T}\boldsymbol{\lambda}=\boldsymbol{b}\Rightarrow\boldsymbol{\lambda}=\left(AA^{T}\right)^{-1}\left(\boldsymbol{b}-AW^{T}\boldsymbol{x}\right)\label{eq:RL2_solution_lambda}
\end{equation}
provided $(AA^{T})^{-1}$ exists. Using the solution in (\ref{eq:RL2_solution_lambda})
and eliminating $\boldsymbol{\lambda}$ from the objective function
in (\ref{eq:L2_Lagrangian_single}), we obtain 

\begin{equation}
R_{\ell_{2}}(W,\boldsymbol{\lambda}(W))\propto-\boldsymbol{y}^{T}W^{T}\boldsymbol{x}+\boldsymbol{x}^{T}WQ\boldsymbol{b}+\frac{1}{2}\boldsymbol{x}^{T}WPW^{T}\boldsymbol{x}\label{eq:RL2_final}
\end{equation}
where $P\equiv(I-A^{T}(AA^{T})^{-1}A)$ is a projection matrix (with
$P^{2}=P$) and $Q\equiv A^{T}(AA^{T})^{-1}$ and with properties
$PQ=O,$ $AP=O$ and $AQ=I$ which can be readily verified. The objective
function w.r.t. $W$ is clearly bounded from below by virtue of the
last term in $(\ref{eq:RL2_final}).$ Taking the gradient of $R_{\ell_{2}}$
w.r.t. $W$, we obtain

\begin{equation}
\nabla_{W}R_{\ell_{2}}(W,\boldsymbol{\lambda}(W))=\left[-\boldsymbol{y}+Q\boldsymbol{b}+PW^{T}\boldsymbol{x}\right]\boldsymbol{x}^{T}.
\end{equation}
From this, we get

\begin{equation}
A\nabla_{W}R_{\ell_{2}}(W,\boldsymbol{\lambda}(W))=\left[-A\boldsymbol{y}+AQ\boldsymbol{b}+APW^{T}\boldsymbol{x}\right]\boldsymbol{x}^{T}=\left(-A\boldsymbol{y}+\boldsymbol{b}\right)\boldsymbol{x}^{T}=O
\end{equation}
since $AQ=I,\,AP=O$ and provided the target $\boldsymbol{y}$ satisfies
$A\boldsymbol{y}=\boldsymbol{b}$ which was our starting assumption. 
\end{proof}
We have shown that the gradient of the effective objective function
w.r.t. $W$ for each training set pair $(\boldsymbol{x},\boldsymbol{y})$
obtained by eliminating $\boldsymbol{z}$ and \textbf{$\boldsymbol{\lambda}$}
from the Lagrangian in (\ref{eq:L2_Lagrangian_single}) lies in the
nullspace of $A$. This observation may be useful mainly for analysis
since actual constraint satisfaction may be computationally more efficient
in the dual objective function (written for all training set instances
for the sake of completion):

\begin{align}
E_{\mathrm{\ell_{2}}}^{(\mathrm{total})}(W,\left\{ \boldsymbol{\lambda}_{n}\right\} ) & =\sum_{n=1}^{N}\left[-\boldsymbol{y}_{n}^{T}W^{T}\boldsymbol{x}_{n}-\boldsymbol{b}_{n}^{T}\boldsymbol{\lambda}_{n}+\frac{1}{2}\|W^{T}\boldsymbol{x}_{n}+A^{T}\boldsymbol{\lambda}_{n}\|_{2}^{2}\right]\nonumber \\
 & =\sum_{n=1}^{N}\left[-\boldsymbol{y}_{n}^{T}\left(W^{T}\boldsymbol{x}_{n}+A^{T}\boldsymbol{\lambda}_{n}\right)+\frac{1}{2}\|W^{T}\boldsymbol{x}_{n}+A^{T}\boldsymbol{\lambda}_{n}\|_{2}^{2}\right]\,\,\,\,\,\,(\mathrm{since}\,A\boldsymbol{y}_{n}=\boldsymbol{b}_{n})\nonumber \\
 & =\sum_{n=1}^{N}\left[-\boldsymbol{y}_{n}^{T}\boldsymbol{z}_{n}(W,\lambda_{n})+\frac{1}{2}\|\boldsymbol{z}_{n}(W,\lambda_{n})\|_{2}^{2}\right]\label{eq:El2_total}
\end{align}
where the predictor 

\begin{equation}
\boldsymbol{z}_{n}(W,\boldsymbol{\lambda}_{n})\equiv W^{T}\boldsymbol{x}_{n}+A^{T}\boldsymbol{\lambda}_{n}
\end{equation}
as before in (\ref{eq:zl2sol}) but modified to express the dependence
on the parameters $W$ and $\boldsymbol{\lambda}_{n}$. The surprising
fact that we can augment the penultimate layer units and the final
layer weights in this manner has its origin in the difference between
the squared $\ell_{2}$ norms used back in (\ref{eq:L2_Lagrangian_single}).
Therefore, we have shown that the components of the Lagrange parameter
vector $\boldsymbol{\lambda}$ can be interpreted as additional, penultimate
layer hidden units with fixed weights obtained from $A$. In the remainder
of the paper, we generalize this idea to Bregman divergences allowing
us to prove the same result for any suitable generalized sigmoid (under
the presence of linear equality constraints).

\subsection{The difference of Bregman divergences Lagrangian}

As we move from the specific (MC-LR and $\ell_{2}$) to the general
(Bregman divergence), we define some terms which simplify the notation
and development.
\begin{defn}
The generalized sigmoid vector $\boldsymbol{\sigma}(\boldsymbol{u})$
is defined as $\boldsymbol{\sigma}(\boldsymbol{u})=\left[\sigma(u_{1}),\ldots,\sigma(u_{K})\right]^{T}$
where $\sigma(u)$ is continuous, differentiable and monotonic ($\sigma^{\prime}(u)>0,\,\forall u$). 
\end{defn}
An example (different from the standard sigmoid) is $\sigma(u)=\tanh(u).$
Its derivative $\sigma^{\prime}(u)=\mathrm{sech}^{2}(u)>0,\,\forall u\in\left(-\infty,\infty\right)$.
\begin{defn}
\label{def:gensigint}The generalized sigmoid integral is defined
as the vector $\boldsymbol{\phi}(\boldsymbol{u})=\left[\phi(u_{1}),\ldots,\phi(u_{K})\right]^{T}$
where $\phi(u)=\int_{-\infty}^{u}\sigma(u)du$. 
\end{defn}
Continuing the $\sigma(u)=\tanh(u)$ example, the sigmoid integral
for $\sigma(u)=\tanh(u)$ is $\phi(u)=\log\cosh(u)$.
\begin{defn}
The generalized inverse sigmoid is defined as a vector $\boldsymbol{u}(\boldsymbol{z})=\left[u(z_{1}),\ldots,u(z_{K})\right]^{T}$
where $u(z)=\sigma^{-1}(z)=\left(\phi^{\prime}\right)^{-1}(z)$.
\end{defn}
The inverse sigmoid for $\sigma(u)=\tanh(u)$ is $u(z)=\frac{1}{2}\log\frac{1+z}{1-z}$.
\begin{defn}
The generalized negative entropy $\boldsymbol{\psi}(\boldsymbol{z})$
is defined as $\boldsymbol{z}^{T}\boldsymbol{u}(\boldsymbol{z})-\boldsymbol{e}^{T}\boldsymbol{\phi}(\boldsymbol{u}(\boldsymbol{z}))$.
\end{defn}
For $\sigma(u)=\tanh(u)$, the negative entropy $\psi(z)=\frac{1}{2}\left[(1+z)\log(1+z)+(1-z)\log(1-z)\right]$
and $\phi(u(z))=\frac{1}{2}\log(1-z^{2})$. The generalized negative
entropy $\psi(z)$ is related to the Legendre transform of $\phi(u)$
(and can be derived from Legendre-Fenchel duality theory \cite{fenchel2014conjugate,mjolsness1990algebraic})
but this relationship is not emphasized in the paper.
\begin{cor}
\label{lem:psi_convex}The generalized negative entropy $\boldsymbol{\psi}(\boldsymbol{z})$
is convex.
\end{cor}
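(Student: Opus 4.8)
The plan is to exploit the fact that the generalized negative entropy is \emph{separable}: since $\boldsymbol{\psi}(\boldsymbol{z})=\sum_{k=1}^{K}\psi(z_k)$ with scalar summands $\psi(z)=z\,u(z)-\phi(u(z))$, the Hessian of $\boldsymbol{\psi}$ is diagonal, and convexity of the vector function reduces to convexity of the single scalar function $\psi$. So I would first establish $\psi''(z)>0$ in the scalar case, and then conclude that the diagonal Hessian $\mathrm{diag}\!\left(\psi''(z_1),\ldots,\psi''(z_K)\right)$ is positive definite, which yields convexity of $\boldsymbol{\psi}$.

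First I would differentiate $\psi(z)=z\,u(z)-\phi(u(z))$ once. The key simplification is the identity $\phi'(u(z))=z$, which holds precisely because $u(z)=(\phi')^{-1}(z)$ by definition. Applying the product and chain rules gives $\psi'(z)=u(z)+z\,u'(z)-\phi'(u(z))\,u'(z)$, and substituting $\phi'(u(z))=z$ cancels the two $u'(z)$ terms, leaving the clean expression $\psi'(z)=u(z)$.

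Next I would differentiate again to obtain $\psi''(z)=u'(z)$, and invoke the inverse function theorem. Since $u=(\phi')^{-1}=\sigma^{-1}$ and $\phi''=\sigma'$, we have $u'(z)=1/\phi''(u(z))=1/\sigma'(u(z))$. The monotonicity hypothesis on the generalized sigmoid, namely $\sigma'(u)>0$ for all $u$, then forces $\psi''(z)=1/\sigma'(u(z))>0$, so each scalar summand is strictly convex and the corollary follows.

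There is no serious obstacle here, as this is essentially a corollary of the earlier definitions; the step requiring the most care is the appeal to the inverse function theorem. I would want $u=\sigma^{-1}$ to be well defined and differentiable, which is guaranteed because $\sigma$ continuous and differentiable with $\sigma'>0$ makes it a strictly increasing diffeomorphism onto its range, so $u$ is differentiable there and writing $\psi''$ is legitimate. I would also note that all derivatives are understood on the range of $\sigma$, the natural domain of $z$.
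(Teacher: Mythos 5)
Your proposal is correct and follows essentially the same route as the paper's proof: reduce to the scalar summand $\psi(z)=z\,u(z)-\phi(u(z))$, compute $\psi'(z)=u(z)$ via the identity $\phi'(u(z))=z$, and conclude $\psi''(z)=u'(z)=1/\sigma'(u(z))>0$ from the monotonicity of the generalized sigmoid. Your added remarks on the inverse function theorem and the natural domain of $z$ make explicit what the paper leaves implicit, but the argument is the same.
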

\begin{proof}
Since $\boldsymbol{\psi}(\boldsymbol{z})$ is a sum of individual
functions $\psi(z)$, we just need to show that $\psi(z)$ is convex.
The derivative of $\psi(z)$ is

\begin{equation}
\psi^{\prime}(z)=u(z)+zu^{\prime}(z)-\phi^{\prime}(u(z))u^{\prime}(z)=u(z)+zu^{\prime}(z)-zu^{\prime}(z)=u(z)
\end{equation}
since $u(z)=\left(\phi^{\prime}\right)^{-1}(z)$. The second derivative
$\psi^{\prime\prime}(z)$ is

\begin{equation}
\psi^{\prime\prime}(z)=u^{\prime}(z)=\frac{1}{\sigma^{\prime}(u(z))}>0
\end{equation}
since $u(z)=\sigma^{-1}(z)$ and $\sigma^{\prime}(u)>0$. Therefore
the generalized negative entropy is convex.
\end{proof}
We can directly show the convexity of the negative entropy $\psi(z)$
for $\sigma(u)=\tanh(u)$ by evaluating $\psi^{\prime\prime}(z)=u^{\prime}(z)=\frac{d}{dz}\left(\frac{1}{2}\log\frac{1+z}{1-z}\right)=\frac{1}{1-z^{2}}>0,\forall z\in\left[-1,1\right]$.
We now set up the Bregman divergence \cite{Bregman1967200} for the
convex negative entropy function $\psi(z)$.
\begin{defn}
The Bregman divergence for a convex, differentiable function $\psi(z)$
is defined as 

\begin{equation}
B(z||v)=\psi(z)-\psi(v)-\psi^{\prime}(v)(z-v).\label{eq:Bregman}
\end{equation}
\end{defn}
This can be extended to the generalized negative entropy $\boldsymbol{\psi}(\boldsymbol{z})$
but the notation becomes a bit more complex. For the sake of completeness,
the Bregman divergence for $\boldsymbol{\psi}(\boldsymbol{z})$ is
written as

\begin{equation}
B(\boldsymbol{z}||\boldsymbol{v})=\boldsymbol{e}^{T}\left(\boldsymbol{\psi}(\boldsymbol{z})-\boldsymbol{\psi}(\boldsymbol{v})\right)-\left(\nabla\boldsymbol{\psi}(\boldsymbol{v})\right)^{T}(\boldsymbol{z}-\boldsymbol{v})
\end{equation}
where $\nabla\boldsymbol{\psi}(\boldsymbol{v})$ is the gradient vector
of the generalized negative entropy defined as $\nabla\boldsymbol{\psi}(\boldsymbol{v})\equiv\left[\psi^{\prime}(v_{1}),\ldots,\psi^{\prime}(v_{K})\right]^{T}$.

This definition can be used to build the Bregman divergence for the
generalized negative entropy $\psi(z)$ written as

\begin{align}
B(z||v) & =\psi(z)-\psi(v)-\psi^{\prime}(v)(z-v)\nonumber \\
 & =zu(z)-\phi(u(z))-vu(v)+\phi(u(v))-u(v)(z-v)\nonumber \\
 & =z\left[u(z)-u(v)\right]-\phi(u(z))+\phi(u(v)).
\end{align}
Therefore, the Bregman divergence for the choice of $\sigma(u)=\tanh(u)$
is 

\begin{align}
B_{\mathrm{tanh}}(z||v) & =z\left[\frac{1}{2}\log\frac{1+z}{1-z}-\frac{1}{2}\log\frac{1+v}{1-v}\right]-\frac{1}{2}\log\left(1-z^{2}\right)+\frac{1}{2}\log\left(1-v^{2}\right)\nonumber \\
 & =\frac{1}{2}\left[(1+z)\log\frac{1+z}{1+v}+(1-z)\log\frac{1-z}{1-v}\right]
\end{align}
which is the Kullback-Leibler (KL) divergence for a binary random
variable $Z$ taking values in $\left\{ -1,1\right\} $ which is what
we would expect once we make the identification that $z=2\Pr(Z=1)-1$
with a similar identification for $v$.
\begin{cor}
The Bregman divergence $B_{\mathrm{gensig}}(z||v)$ is convex in $z$
for the choice of the generalized negative entropy function $\psi(z)$.
\end{cor}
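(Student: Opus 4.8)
The plan is to exploit the definition of the Bregman divergence in (\ref{eq:Bregman}) together with the convexity of $\psi$ already established in Corollary~\ref{lem:psi_convex}. I would fix $v$ and regard $B(z||v)$ purely as a function of its first argument $z$. The key observation is that in the defining expression $B(z||v)=\psi(z)-\psi(v)-\psi^{\prime}(v)(z-v)$, the term $-\psi(v)$ is constant in $z$ and the term $-\psi^{\prime}(v)(z-v)$ is affine in $z$; hence the only contribution to the curvature in $z$ comes from $\psi(z)$ itself. This is exactly the geometric content of a Bregman divergence: one subtracts the supporting affine function of $\psi$ at $v$, which leaves the curvature of $\psi$ untouched.

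Concretely, I would differentiate $B(z||v)$ twice with respect to $z$. The first derivative is $\frac{\partial}{\partial z}B(z||v)=\psi^{\prime}(z)-\psi^{\prime}(v)$, since the constant and affine terms contribute $0$ and $-\psi^{\prime}(v)$ respectively. Differentiating once more yields $\frac{\partial^{2}}{\partial z^{2}}B(z||v)=\psi^{\prime\prime}(z)$. Invoking the computation in the proof of Corollary~\ref{lem:psi_convex}, namely $\psi^{\prime\prime}(z)=u^{\prime}(z)=\frac{1}{\sigma^{\prime}(u(z))}>0$ (which holds because $\sigma$ is monotonic with $\sigma^{\prime}(u)>0$ for all $u$), I would conclude that $\frac{\partial^{2}}{\partial z^{2}}B(z||v)>0$ on the relevant domain, establishing convexity of $B_{\mathrm{gensig}}(\cdot||v)$ in its first argument.

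For the vector version $B(\boldsymbol{z}||\boldsymbol{v})=\sum_{k=1}^{K}B(z_{k}||v_{k})$, I would note that the Hessian in $\boldsymbol{z}$ is the diagonal matrix $\mathrm{diag}\left(\psi^{\prime\prime}(z_{1}),\ldots,\psi^{\prime\prime}(z_{K})\right)$, which is positive definite by the scalar result; equivalently a sum of convex functions is convex, so the scalar statement immediately lifts to the vector case.

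There is essentially no serious obstacle here: the entire argument collapses to the single observation that subtracting the supporting affine function $\psi(v)+\psi^{\prime}(v)(z-v)$ does not alter the second derivative, so convexity of $B$ in $z$ is inherited directly from convexity of $\psi$. The only point requiring care is to state explicitly that $v$ is held fixed, so that $-\psi^{\prime}(v)(z-v)$ is genuinely affine in $z$ (it would not be if $v$ were permitted to depend on $z$); once this is made clear, the result is a one-line consequence of Corollary~\ref{lem:psi_convex}.
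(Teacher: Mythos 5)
Your proposal is correct and follows essentially the same route as the paper, whose proof likewise observes that for fixed $v$ the terms $-\psi(v)-\psi^{\prime}(v)(z-v)$ are affine in $z$, so $B_{\mathrm{gensig}}(z||v)$ inherits convexity directly from $\psi(z)$ as established in Corollary~\ref{lem:psi_convex}. Your explicit second-derivative computation and the diagonal-Hessian remark for the vector case are merely more detailed renderings of the same one-line argument the paper gives.
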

\begin{proof}
Since $B_{\mathrm{gensig}}(z||v)$ is a summation of $\psi(z)$ and
a term linear in $z$, it is convex in $z$ due to the convexity of
$\psi(z)$ (shown earlier in Lemma~\ref{lem:psi_convex}). 
\end{proof}
We now describe the crux of one of the basic ideas in this paper:
that the difference between squared $\ell_{2}$ norms in (\ref{eq:L2_Lagrangian_single})
when generalized to the difference of Bregman divergences leads to
a dual minimization problem wherein the Lagrange parameters are additional
penultimate layer hidden units. In (\ref{eq:L2_Lagrangian_single})
we showed that nesting a maximization problem on the constrained predictor
$\boldsymbol{z}$ inside the overall minimization problem w.r.t. $W$
had two consequences: (i) eliminating $\boldsymbol{z}$ led to a minimization
problem on $\boldsymbol{\lambda}$; (ii) the saddle-point problem
led to opposing forces---from the target $\boldsymbol{y}$ and from
the input $\boldsymbol{x}$---which led to projected gradients which
satisfied the constraints. Furthermore, the Lagrange parameters carried
the interpretation of penultimate layer hidden units (albeit in a
final layer with a linear output unit) via the form of the dual minimization
problem in (\ref{eq:El2_total}). We now generalize the difference
of squared $\ell_{2}$ norms to the difference of Bregman divergences
and show that the same interpretation holds for suitable nonlinearities
in the output layer. 

From the generalized negative entropy Bregman divergence, we form
the difference of Bregman divergences to get

\begin{align}
B_{\mathrm{gensig}}(y||v)-B_{\mathrm{gensig}}(z||v) & =(z-y)u(v)+yu(y)-\phi(u(y))-zu(z)+\phi(u(z)).\nonumber \\
 & =(z-y)u(v)+\psi(y)-\psi(z).\label{eq:Bregman_difference}
\end{align}

\begin{lem}
The difference of Bregman divergences is linear in the weight matrix
$W$ for the choice of the convex generalized negative entropy vector
$\boldsymbol{\psi}(\boldsymbol{z})$ and for $\boldsymbol{v}=\boldsymbol{\sigma}(W^{T}\boldsymbol{x})$.
\end{lem}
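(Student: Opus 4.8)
The plan is to start directly from the difference of Bregman divergences already computed in (\ref{eq:Bregman_difference}) and to substitute the particular choice $\boldsymbol{v}=\boldsymbol{\sigma}(W^{T}\boldsymbol{x})$, exploiting the fact that the inverse sigmoid exactly undoes the sigmoid. The single algebraic fact driving the whole result is the componentwise composition identity $u(\sigma(t))=t$, which holds because $u(z)=\sigma^{-1}(z)=(\phi^{\prime})^{-1}(z)$ by the definition of the generalized inverse sigmoid (the same relation already invoked in the proof of Corollary~\ref{lem:psi_convex}). Everything else is bookkeeping of the vector notation.

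First I would lift the scalar identity (\ref{eq:Bregman_difference}) to its componentwise-summed vector form over the $K$ outputs,
\begin{equation}
B_{\mathrm{gensig}}(\boldsymbol{y}||\boldsymbol{v})-B_{\mathrm{gensig}}(\boldsymbol{z}||\boldsymbol{v})=(\boldsymbol{z}-\boldsymbol{y})^{T}\boldsymbol{u}(\boldsymbol{v})+\boldsymbol{e}^{T}\boldsymbol{\psi}(\boldsymbol{y})-\boldsymbol{e}^{T}\boldsymbol{\psi}(\boldsymbol{z}),
\end{equation}
where $\boldsymbol{u}(\boldsymbol{v})$ is the generalized inverse sigmoid applied pointwise. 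I would then substitute $\boldsymbol{v}=\boldsymbol{\sigma}(W^{T}\boldsymbol{x})$. Since the inverse sigmoid is applied component-by-component and $u(\sigma(\boldsymbol{w}_{k}^{T}\boldsymbol{x}))=\boldsymbol{w}_{k}^{T}\boldsymbol{x}$ for each $k$, we get $\boldsymbol{u}(\boldsymbol{\sigma}(W^{T}\boldsymbol{x}))=W^{T}\boldsymbol{x}$, which collapses the only nonlinear-looking term and yields
\begin{equation}
B_{\mathrm{gensig}}(\boldsymbol{y}||\boldsymbol{\sigma}(W^{T}\boldsymbol{x}))-B_{\mathrm{gensig}}(\boldsymbol{z}||\boldsymbol{\sigma}(W^{T}\boldsymbol{x}))=(\boldsymbol{z}-\boldsymbol{y})^{T}W^{T}\boldsymbol{x}+\boldsymbol{e}^{T}\boldsymbol{\psi}(\boldsymbol{y})-\boldsymbol{e}^{T}\boldsymbol{\psi}(\boldsymbol{z}).
\end{equation}

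Finally I would observe that of the three terms on the right, the first, $(\boldsymbol{z}-\boldsymbol{y})^{T}W^{T}\boldsymbol{x}$, is manifestly linear in the entries of $W$, while $\boldsymbol{e}^{T}\boldsymbol{\psi}(\boldsymbol{y})$ and $\boldsymbol{e}^{T}\boldsymbol{\psi}(\boldsymbol{z})$ depend only on the target $\boldsymbol{y}$ and the auxiliary predictor $\boldsymbol{z}$ and are therefore constant with respect to $W$; hence the difference is affine (linear up to a $W$-independent constant) in $W$, as claimed. There is essentially no serious obstacle here: the only points requiring care are that $\boldsymbol{z}$ and $\boldsymbol{y}$ must be treated as variables independent of $W$ (so that the $\boldsymbol{\psi}$ terms really are $W$-constants), and that the inversion identity must be applied entrywise rather than to any matrix object. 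The entire force of the lemma is concentrated in the cancellation $\boldsymbol{u}\circ\boldsymbol{\sigma}=\mathrm{id}$, which is precisely what removes the sigmoid's nonlinearity from the $W$-dependence and sets up the elimination-of-$\boldsymbol{z}$ argument to follow.
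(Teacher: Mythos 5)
Your proof is correct and takes essentially the same route as the paper's: expand the scalar difference of Bregman divergences as in (\ref{eq:Bregman_difference}), lift it to the componentwise vector form, substitute $\boldsymbol{v}=\boldsymbol{\sigma}(W^{T}\boldsymbol{x})$, and collapse the nonlinearity via the identity $\boldsymbol{u}(\boldsymbol{\sigma}(W^{T}\boldsymbol{x}))=W^{T}\boldsymbol{x}$. The only (cosmetic) difference is that you carry the $W$-independent term $\boldsymbol{e}^{T}\boldsymbol{\psi}(\boldsymbol{y})$ explicitly and correctly describe the result as affine in $W$, whereas the paper absorbs those terms into its $\propto$ notation.
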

\begin{proof}
This is a straightforward consequence of the two choices.

\begin{align}
B_{\mathrm{gensig}}(y||v)-B_{\mathrm{gensig}}(z||v) & =y(u(y)-u(v))-\phi(u(y))+\phi(u(v))-\left[z(u(z)-u(v))-\phi(u(z))+\phi(u(v))\right]\nonumber \\
 & =(z-y)u(v)+yu(y)-\phi(u(y))-zu(z)+\phi(u(z))\nonumber \\
 & =(z-y)u(v)-zu(z)+\phi(u(z))+\mathrm{terms}\,\mathrm{independent}\,\mathrm{of}\,z\nonumber \\
 & =(z-y)u(v)-\psi(z)+\mathrm{terms}\,\mathrm{independent}\,\mathrm{of}\,z.\label{eq:Bregman_div_diff}
\end{align}
We write this for a single instance as

\begin{align}
B_{\mathrm{gensig}}(\boldsymbol{y}||\boldsymbol{v})-B_{\mathrm{gensig}}(\boldsymbol{z}||\boldsymbol{v}) & \propto(\boldsymbol{z}-\boldsymbol{y})^{T}\boldsymbol{u}(\boldsymbol{v})-\boldsymbol{z}^{T}\boldsymbol{u}(\boldsymbol{z})+\boldsymbol{e}^{T}\boldsymbol{\phi}(\boldsymbol{u}(\boldsymbol{z})).\\
 & =(\boldsymbol{z}-\boldsymbol{y})^{T}\boldsymbol{u}(\boldsymbol{v})-\boldsymbol{e}^{T}\boldsymbol{\psi}(\boldsymbol{z}).
\end{align}
Note that the difference of Bregman divergences depends on $\boldsymbol{v}$
only through $\boldsymbol{u}(\boldsymbol{v})$. We now set $\boldsymbol{v}=\boldsymbol{\sigma}(W^{T}\boldsymbol{x})$
and rewrite the difference between Bregman divergences using terms
that only depend on $W,\boldsymbol{x}$ and $\boldsymbol{z}$ to get

\begin{equation}
B_{\mathrm{gensig}}(\boldsymbol{y}||\boldsymbol{\sigma}(W^{T}\boldsymbol{x}))-B_{\mathrm{gensig}}(\boldsymbol{z}||\boldsymbol{\sigma}(W^{T}\boldsymbol{x}))\propto(\boldsymbol{z}-\boldsymbol{y})^{T}W^{T}\boldsymbol{x}-\boldsymbol{e}^{T}\boldsymbol{\psi}(\boldsymbol{z})\label{eq:Bregman_div_diff_WTx}
\end{equation}
which is linear in the weight matrix $W$. 
\end{proof}
\begin{cor}
The difference of Bregman divergences for the choice of $\phi(z)=\frac{z^{2}}{2}$
is identical to the difference of squared $\ell_{2}$ norms in (\ref{eq:L2_Lagrangian_single}).
\end{cor}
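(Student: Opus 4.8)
The plan is to specialize the general Bregman machinery to the identity sigmoid $\sigma(u)=u$ and verify that every object collapses to its squared $\ell_{2}$ counterpart. First I would read off the $\ell_{2}$ row of Table~\ref{tab:Sigmoid-to-Bregman}: for $\sigma(u)=u$ the sigmoid integral is $\phi(u)=\frac{u^{2}}{2}$, the inverse sigmoid $u(z)=\sigma^{-1}(z)=z$ is simply the identity, and the negative entropy is $\psi(z)=\frac{z^{2}}{2}$. The one genuine check here is to confirm the claimed component-wise Bregman divergence $B(z||v)=\frac{1}{2}(z-v)^{2}$ directly from the definition (\ref{eq:Bregman}): substituting $\psi(z)=\frac{z^{2}}{2}$, $\psi(v)=\frac{v^{2}}{2}$ and $\psi'(v)=v$ gives $\frac{z^{2}}{2}-\frac{v^{2}}{2}-v(z-v)$, which factors as $\frac{1}{2}(z-v)^{2}$.

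Next I would form the difference of Bregman divergences using the already-derived identity (\ref{eq:Bregman_difference}), specialized to $u(z)=z$ and $\psi(z)=\frac{z^{2}}{2}$. This yields $(z-y)v+\frac{y^{2}}{2}-\frac{z^{2}}{2}$, which is just $\frac{1}{2}(y-v)^{2}-\frac{1}{2}(z-v)^{2}$. I would then set the anchor point to $\boldsymbol{v}=\boldsymbol{\sigma}(W^{T}\boldsymbol{x})=W^{T}\boldsymbol{x}$, using the fact that the identity sigmoid acts componentwise so that $\boldsymbol{v}=W^{T}\boldsymbol{x}$ exactly. Summing the component-wise divergences via the convention $B(\boldsymbol{z}||\boldsymbol{v})=\sum_{k}B(z_{k}||v_{k})$ from the Notation section produces $\frac{1}{2}\|\boldsymbol{y}-W^{T}\boldsymbol{x}\|_{2}^{2}-\frac{1}{2}\|\boldsymbol{z}-W^{T}\boldsymbol{x}\|_{2}^{2}$, which is precisely the pair of squared-norm terms appearing in the Lagrangian (\ref{eq:L2_Lagrangian_single}).

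Finally I would observe that both formulations carry the \emph{identical} constraint term $\boldsymbol{\lambda}^{T}(A\boldsymbol{z}-\boldsymbol{b})$, added on top of the divergence part to build the saddle-point Lagrangian; appending it to the specialized difference of Bregman divergences reconstructs $\mathcal{L}_{\ell_{2}}$ in (\ref{eq:L2_Lagrangian_single}) verbatim. This is what establishes that the $\ell_{2}$ case is the stated special case of the general construction.

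There is essentially no analytic obstacle: every step is a direct substitution of the $\ell_{2}$ entries into definitions that have already been proved in general. The only points demanding care are bookkeeping rather than mathematical depth. The first is that the statement writes $\phi(z)$ where the general development writes $\phi(u)$; this mismatch is harmless precisely because $u(z)=z$ is the identity, so the two arguments coincide for this choice. The second is the passage from the scalar, component-wise Bregman divergence to the vector squared $\ell_{2}$ norm, which relies both on the summation convention $B(\boldsymbol{z}||\boldsymbol{v})=\sum_{k}B(z_{k}||v_{k})$ and on $\boldsymbol{\sigma}$ being applied componentwise. Thus the corollary is, as its parent lemma was, a straightforward consequence of the choices, and the ``work'' is simply a verification that the general definitions reduce correctly.
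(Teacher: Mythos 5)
Your proposal is correct and follows the same route as the paper: substituting the $\ell_{2}$ choice ($u(z)=z$, $\psi(z)=\frac{z^{2}}{2}$) into the difference identity (\ref{eq:Bregman_difference}) and simplifying to $\frac{1}{2}(y-v)^{2}-\frac{1}{2}(z-v)^{2}$, then summing components with $\boldsymbol{v}=W^{T}\boldsymbol{x}$ to recover the squared-norm terms of (\ref{eq:L2_Lagrangian_single}). Your additional checks (deriving $B(z||v)=\frac{1}{2}(z-v)^{2}$ from (\ref{eq:Bregman}) and noting the identical constraint term $\boldsymbol{\lambda}^{T}(A\boldsymbol{z}-\boldsymbol{b})$) are harmless elaborations of what the paper leaves implicit.
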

\begin{proof}
We evaluate the difference of Bregman divergences for $\phi(z)=\frac{z^{2}}{2}$
by substituting this choice of $\phi(z)$ in (\ref{eq:Bregman_difference})
to get 

\begin{align}
B_{\mathrm{\ell_{2}}}(y||v)-B_{\mathrm{\ell_{2}}}(z||v) & =(z-y)v+y^{2}-\frac{y^{2}}{2}-z^{2}+\frac{z^{2}}{2}\nonumber \\
 & =\frac{1}{2}(y-v)^{2}-\frac{1}{2}(z-v)^{2}
\end{align}
which when extended to the entire set of hidden units is the difference
between the squared $\ell_{2}$ norms as in (\ref{eq:L2_Lagrangian_single}).
\end{proof}
The general saddle-point problem can be now be expressed (for a single
training set pair) as

\begin{equation}
\min_{W}\max_{\boldsymbol{z}}B_{\mathrm{gensig}}(\boldsymbol{y}||\boldsymbol{\sigma}(W^{T}\boldsymbol{x}))-B_{\mathrm{gensig}}(\boldsymbol{z}||\boldsymbol{\sigma}(W^{T}\boldsymbol{x}))\propto\min_{W}\max_{\boldsymbol{z}}(\boldsymbol{z}-\boldsymbol{y})^{T}W^{T}\boldsymbol{x}-\boldsymbol{e}^{T}\boldsymbol{\psi}(\boldsymbol{z})\label{eq:minmaxBreg_diff}
\end{equation}
 
\[
\mathrm{subject}\,\mathrm{to}\,A\boldsymbol{z}=\boldsymbol{b}.
\]
The order of the maximization and minimization matters. It is precisely
because we have introduced a maximization problem on $\boldsymbol{z}$
\emph{inside} a minimization problem on the weights $W$ that we are
able to obtain a dual minimization problem on the Lagrange parameters
in exactly the same manner as in multi-class logistic regression.
This will have significant implications later. The difference of Bregman
divergences is \emph{concave} in $\boldsymbol{z}$. Note that the
weight matrix $W$ appears linearly in (\ref{eq:Bregman_div_diff_WTx})
due to the choice of $\boldsymbol{v}=\boldsymbol{\sigma}(W^{T}\boldsymbol{x})$
which produces the correct generalization of (\ref{eq:MC-LR_Lagrangian_single}).
We add the linear constraints $A\boldsymbol{z}=\boldsymbol{b}$ as
before and write the Lagrangian for a single instance

\begin{equation}
\mathcal{L}_{\mathrm{gensig}}(W,\boldsymbol{z},\boldsymbol{\lambda})=\left(\boldsymbol{z}-\boldsymbol{y}\right)^{T}W^{T}\boldsymbol{x}-\boldsymbol{e}^{T}\boldsymbol{\psi}(\boldsymbol{z})+\boldsymbol{\lambda}^{T}(A\boldsymbol{z}-\boldsymbol{b})\label{eq:Lagrangian_gensig_single}
\end{equation}
which can be expanded to the entire training set as 

\begin{equation}
\mathcal{L}_{\mathrm{gensig}}^{(\mathrm{total})}(W,\left\{ \boldsymbol{z}_{n}\right\} ,\left\{ \boldsymbol{\lambda}_{n}\right\} )=\sum_{n=1}^{N}\left[\left(\boldsymbol{z}_{n}-\boldsymbol{y}_{n}\right)^{T}W^{T}\boldsymbol{x}_{n}-\boldsymbol{e}^{T}\boldsymbol{\psi}(\boldsymbol{z}_{n})+\boldsymbol{\lambda}_{n}^{T}(A\boldsymbol{z}_{n}-\boldsymbol{b}_{n})\right].\label{eq:Lagrangian_gensig_total}
\end{equation}
where the negative entropy $\boldsymbol{\psi}(\boldsymbol{z})=\boldsymbol{z}^{T}\boldsymbol{u}(\boldsymbol{z})-\boldsymbol{e}^{T}\boldsymbol{\phi}(\boldsymbol{u}(\boldsymbol{z}))$.
The final saddle-point Lagrangian contains a minimization problem
on $W$ and a maximization problem on $\left\{ \boldsymbol{z}_{n}\right\} $
with the unknown Lagrange parameter vectors $\left\{ \boldsymbol{\lambda}_{n}\right\} $
handling the linear constraints. Note that $\boldsymbol{\phi}(\boldsymbol{u}(\boldsymbol{z}))$
lurking inside $\boldsymbol{\psi}(\boldsymbol{z})$ is a generalized
sigmoid integral (as per Definition~\ref{def:gensigint}) with $\boldsymbol{u}(\boldsymbol{z})$
being the \emph{matched} generalized inverse sigmoid. The generalized
sigmoid integral is not restricted to the exponential function. 

We now show that the generalized dual objective function on $\left\{ \boldsymbol{\lambda}_{n}\right\} $
(along with a minimization problem on $W$) is of the same form as
the MC-LR dual objective function (\ref{eq:E_MC-LR_total}) and is
written as 

\begin{equation}
E_{\mathrm{gensig}}^{(\mathrm{total})}(W,\left\{ \boldsymbol{\lambda}_{n}\right\} )=\sum_{n=1}^{N}\left[-\boldsymbol{y}_{n}^{T}\left(W^{T}\boldsymbol{x}_{n}+A^{T}\boldsymbol{\lambda}_{n}\right)+\boldsymbol{e}^{T}\boldsymbol{\phi}(W^{T}\boldsymbol{x}_{n}+A^{T}\boldsymbol{\lambda}_{n})\right].\label{eq:E_gensig_total}
\end{equation}

\begin{lem}
Eliminating $\boldsymbol{z}$ via maximization in (\ref{eq:Lagrangian_gensig_total})
and solving for the Lagrange parameter vector \textbf{$\boldsymbol{\lambda}$}
via minimization yield the dual objective function in (\ref{eq:E_gensig_total})
provided $u(z)$ is set to $\left(\phi^{\prime}\right)^{-1}(z)$.
\end{lem}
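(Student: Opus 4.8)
The plan is to mirror the proof of Lemma~\ref{lem:Lag_MC-LR_zlamsol}, since the Lagrangian (\ref{eq:Lagrangian_gensig_total}) is the exact generalization of the MC-LR Lagrangian (\ref{eq:MC-LR_Lagrangian}): the scalar sum-to-one constraint is replaced by $A\boldsymbol{z}_n=\boldsymbol{b}_n$ and the $\exp$-specialized entropy by the generalized negative entropy $\boldsymbol{\psi}$. Because (\ref{eq:Lagrangian_gensig_total}) is separable over the training index $n$ (a distinct $\boldsymbol{z}_n$ and $\boldsymbol{\lambda}_n$ per instance, with only $W$ shared), I would first drop the subscript and work with the single-instance Lagrangian (\ref{eq:Lagrangian_gensig_single}), summing over $n$ only at the end.

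The first step is to eliminate $\boldsymbol{z}$ by maximization. As the remaining terms are linear in $\boldsymbol{z}$ and $\boldsymbol{\psi}$ is convex (Corollary~\ref{lem:psi_convex}), the Lagrangian is concave in $\boldsymbol{z}$, so the stationary point is the unique maximizer and strong duality holds \cite{boyd2004convex,slater2014lagrange}. Differentiating (\ref{eq:Lagrangian_gensig_single}) and using the identity $\psi'(z)=u(z)$ established inside the proof of Corollary~\ref{lem:psi_convex}, the stationarity condition is $W^T\boldsymbol{x}-\boldsymbol{u}(\boldsymbol{z})+A^T\boldsymbol{\lambda}=\boldsymbol{0}$, i.e. $\boldsymbol{u}(\boldsymbol{z})=W^T\boldsymbol{x}+A^T\boldsymbol{\lambda}$. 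This is needed only implicitly; setting $u=(\phi')^{-1}$ gives $\boldsymbol{z}=\boldsymbol{\sigma}(W^T\boldsymbol{x}+A^T\boldsymbol{\lambda})$ without ever requiring a closed-form inverse.

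The second step is the back-substitution. Expanding $\boldsymbol{e}^T\boldsymbol{\psi}(\boldsymbol{z})=\boldsymbol{z}^T\boldsymbol{u}(\boldsymbol{z})-\boldsymbol{e}^T\boldsymbol{\phi}(\boldsymbol{u}(\boldsymbol{z}))$ and collecting the $\boldsymbol{z}$-linear terms as $\boldsymbol{z}^T(W^T\boldsymbol{x}+A^T\boldsymbol{\lambda})-\boldsymbol{z}^T\boldsymbol{u}(\boldsymbol{z})$, the stationarity condition makes this group vanish identically---the same cancellation that drives (\ref{eq:MC-LR_lambda}). What survives is $R_{\mathrm{gensig}}(W,\boldsymbol{\lambda})=-\boldsymbol{y}^TW^T\boldsymbol{x}-\boldsymbol{b}^T\boldsymbol{\lambda}+\boldsymbol{e}^T\boldsymbol{\phi}(W^T\boldsymbol{x}+A^T\boldsymbol{\lambda})$, using $\boldsymbol{\phi}(\boldsymbol{u}(\boldsymbol{z}))=\boldsymbol{\phi}(W^T\boldsymbol{x}+A^T\boldsymbol{\lambda})$. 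Invoking the standing assumption $A\boldsymbol{y}=\boldsymbol{b}$ to write $-\boldsymbol{b}^T\boldsymbol{\lambda}=-\boldsymbol{y}^TA^T\boldsymbol{\lambda}$ folds the two linear terms into $-\boldsymbol{y}^T(W^T\boldsymbol{x}+A^T\boldsymbol{\lambda})$, and summing over $n$ reproduces (\ref{eq:E_gensig_total}). To justify the ``minimization'' half of the claim, I would note that $\boldsymbol{e}^T\boldsymbol{\phi}(W^T\boldsymbol{x}+A^T\boldsymbol{\lambda})$ is convex in $\boldsymbol{\lambda}$ (since $\phi'=\sigma$ is monotone increasing) with the remaining terms linear, so $R_{\mathrm{gensig}}$ is convex in $\boldsymbol{\lambda}$, and differentiating w.r.t. $\boldsymbol{\lambda}$ returns $-\boldsymbol{b}+A\boldsymbol{\sigma}(W^T\boldsymbol{x}+A^T\boldsymbol{\lambda})=\boldsymbol{0}$, i.e. $A\boldsymbol{z}=\boldsymbol{b}$, certifying both that it is a minimization and that the recovered predictor satisfies the constraints, as summarized in Section~\ref{sec:constraint_satisfaction}.

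I expect the only genuinely subtle point is the meaning of ``solving for $\boldsymbol{\lambda}$ via minimization.'' Unlike MC-LR, where the scalar $\lambda$ has the closed form $-\log\sum_k\exp\{\cdot\}$ and can be eliminated to recover the fully reduced (\ref{eq:MC-LR_NLL}), here $\boldsymbol{\sigma}$ need not be invertible in closed form, so $\boldsymbol{\lambda}$ generally cannot be solved away. The correct reading is therefore that $\boldsymbol{\lambda}$ is retained as a minimization variable (giving it its penultimate-layer hidden-unit interpretation), which is exactly why the target (\ref{eq:E_gensig_total}) is still $\boldsymbol{\lambda}$-dependent rather than a fully closed-form expression.
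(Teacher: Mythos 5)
Your proposal is correct and follows essentially the same route as the paper's proof: reduce to the single-instance Lagrangian, eliminate $\boldsymbol{z}$ via the stationarity condition $\boldsymbol{u}(\boldsymbol{z})=W^{T}\boldsymbol{x}+A^{T}\boldsymbol{\lambda}$ (you import the cancellation from $\psi^{\prime}(z)=u(z)$ in Corollary~\ref{lem:psi_convex}, whereas the paper re-derives it term by term using $\phi^{\prime}(u(z))=z$), observe that the $\boldsymbol{z}$-linear terms vanish at the stationary point, fold $-\boldsymbol{b}^{T}\boldsymbol{\lambda}$ into $-\boldsymbol{y}^{T}A^{T}\boldsymbol{\lambda}$ via $A\boldsymbol{y}=\boldsymbol{b}$, and sum over $n$ to recover (\ref{eq:E_gensig_total}). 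Your additional remarks---the explicit convexity of the dual in $\boldsymbol{\lambda}$ (which the paper asserts without proof) and the point that $\boldsymbol{\lambda}$ is retained as a minimization variable rather than eliminated in closed form---are consistent with, and mildly sharpen, the paper's treatment.
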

\begin{proof}
Since $\boldsymbol{\psi}(\boldsymbol{z})$ is continuous and differentiable,
we differentiate and solve for $\boldsymbol{z}$ in (\ref{eq:Lagrangian_gensig_total}).
This is simpler to write for the single instance Lagrangian in (\ref{eq:Lagrangian_gensig_single}).
Differentiating (\ref{eq:Lagrangian_gensig_single}) w.r.t. $\boldsymbol{z}$
and setting the result to $\boldsymbol{0}$, we get

\begin{equation}
W^{T}\boldsymbol{x}-\boldsymbol{u}(\boldsymbol{z})-\boldsymbol{z}\odot\nabla\boldsymbol{u}(\boldsymbol{z})+\nabla\boldsymbol{\phi}(\boldsymbol{u}(\boldsymbol{z}))\odot\nabla\boldsymbol{u}(\boldsymbol{z})+A^{T}\boldsymbol{\lambda}=\boldsymbol{0}.
\end{equation}
(Please see section~\ref{subsec:Notation:} on the notation used
for an explanation of these terms.) Since $\phi^{\prime}(u(z))=\phi^{\prime}((\phi^{\prime})^{-1}(z))=z$,
we have $\boldsymbol{z}=\nabla_{\boldsymbol{u}}\boldsymbol{\phi}(\boldsymbol{u}(\boldsymbol{z}))$.
From this we get

\begin{equation}
\boldsymbol{u}(\boldsymbol{z})=W^{T}\boldsymbol{x}+A^{T}\boldsymbol{\lambda}
\end{equation}
and therefore

\begin{equation}
\boldsymbol{z}^{T}W^{T}\boldsymbol{x}-\boldsymbol{z}^{T}\boldsymbol{u}(\boldsymbol{z})+\boldsymbol{z}^{T}A^{T}\boldsymbol{\lambda}=0.
\end{equation}
Substituting this in (\ref{eq:Lagrangian_gensig_total}), we get the
dual objective function (along with a minimization problem on $W$)
written for just a single training set instance to be\\
\begin{align}
E_{\mathrm{gensig}}(W,\boldsymbol{\lambda}) & =\left[-\boldsymbol{y}^{T}W^{T}\boldsymbol{x}-\boldsymbol{b}^{T}\boldsymbol{\lambda}+\boldsymbol{e}^{T}\boldsymbol{\phi}(W^{T}\boldsymbol{x}+A^{T}\boldsymbol{\lambda})\right]\nonumber \\
 & =\left[-\boldsymbol{y}^{T}\left(W^{T}\boldsymbol{x}+A^{T}\boldsymbol{\lambda}\right)+\boldsymbol{e}^{T}\boldsymbol{\phi}(W^{T}\boldsymbol{x}+A^{T}\boldsymbol{\lambda})\right]\,\,\,\,(\mathrm{since}\,A\boldsymbol{y}=\boldsymbol{b})\nonumber \\
 & =\left[-\boldsymbol{y}^{T}\boldsymbol{u}+\boldsymbol{e}^{T}\boldsymbol{\phi}(\boldsymbol{u})\right]\label{eq:Egensig_single}
\end{align}
where $\boldsymbol{u}\equiv W^{T}\boldsymbol{x}+A^{T}\boldsymbol{\lambda}$
is the linear predictor (with $\boldsymbol{z}=\boldsymbol{\sigma}(\boldsymbol{u})$
being the nonlinear predictor). Equation~(\ref{eq:Egensig_single})
when expanded to the entire training set yields (\ref{eq:E_gensig_total})
and can be rewritten in component-wise form as

\begin{align}
E_{\mathrm{gensig}}^{(\mathrm{total})}\left(W,\left\{ \boldsymbol{\lambda}_{n}\right\} \right) & =\sum_{n=1}^{N}\left[-\sum_{k=1}^{K}y_{nk}\left(\sum_{j=1}^{J}w_{jk}x_{nj}+\sum_{i=1}^{I}a_{ik}\lambda_{ni}\right)+\sum_{k=1}^{K}\phi\left(\sum_{j=1}^{J}w_{jk}x_{nj}+\sum_{i=1}^{I}a_{ik}\lambda_{ni}\right)\right]\nonumber \\
 & =\sum_{n=1}^{N}\left[-\sum_{k=1}^{K}y_{nk}u_{nk}+\sum_{k=1}^{K}\phi\left(u_{nk}\right)\right]
\end{align}
where

\begin{equation}
u_{nk}\equiv\sum_{j=1}^{J}w_{jk}x_{nj}+\sum_{i=1}^{I}a_{ik}\lambda_{ni},
\end{equation}
the component-wise linear predictor counterpart to $\boldsymbol{u}=W^{T}\boldsymbol{x}+A^{T}\boldsymbol{\lambda}$.
We have shown that we get a minimization problem on both the weights
and the Lagrange parameter vectors (one per training set instance)
in a similar manner to the case of multi-class logistic regression.
We stress that $z=\sigma(u)$ requires $\sigma(u)=\phi^{\prime}(u):$
the chosen nonlinearity and the choice of $\phi(u)$ in the loss function
in (\ref{eq:Egensig_single}) are in lockstep. This is a departure
from standard practice in present day deployments of feedforward neural
networks.

The minimization problem in (\ref{eq:E_gensig_total}) is valid for
any generalized sigmoid $\sigma(u)$ provided it is continuous, differentiable
and monotonic. We state without proof that the dual objective function
is convex w.r.t. both the top layer weights $W$ and the Lagrange
parameter vectors $\left\{ \boldsymbol{\lambda}_{n}\right\} $. However,
since we plan to deploy standard stochastic gradient descent methods
for optimization, this fact is not leveraged here. 
\end{proof}
We end by showing that the predictor \textbf{$\boldsymbol{z}$ }satisfies
the linear equality constraints. 
\begin{cor}
The nonlinear predictor $\boldsymbol{z}=\boldsymbol{\sigma}(\boldsymbol{u})=\boldsymbol{\sigma}\left(W^{T}\boldsymbol{x}+A^{T}\boldsymbol{\lambda}\right)$
satisfies the constraints $A\boldsymbol{z}=\boldsymbol{b}$ for the
optimal setting of the Lagrange parameter vector $\boldsymbol{\lambda}$. 
\end{cor}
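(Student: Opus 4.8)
The plan is to obtain the constraint $A\boldsymbol{z}=\boldsymbol{b}$ directly as the first-order stationarity condition of the dual objective $E_{\mathrm{gensig}}(W,\boldsymbol{\lambda})$ from (\ref{eq:Egensig_single}) with respect to $\boldsymbol{\lambda}$. The phrase ``optimal setting of $\boldsymbol{\lambda}$'' refers to the minimizer of $E_{\mathrm{gensig}}$ over $\boldsymbol{\lambda}$ with $W$ held fixed, so the first thing I would do is write $\nabla_{\boldsymbol{\lambda}}E_{\mathrm{gensig}}=\boldsymbol{0}$ and unpack it term by term.

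First I would differentiate the first form of (\ref{eq:Egensig_single}), namely $-\boldsymbol{y}^{T}W^{T}\boldsymbol{x}-\boldsymbol{b}^{T}\boldsymbol{\lambda}+\boldsymbol{e}^{T}\boldsymbol{\phi}(W^{T}\boldsymbol{x}+A^{T}\boldsymbol{\lambda})$, in $\boldsymbol{\lambda}$. The first term is constant in $\boldsymbol{\lambda}$; the second contributes $-\boldsymbol{b}$. For the third term, setting $\boldsymbol{u}=W^{T}\boldsymbol{x}+A^{T}\boldsymbol{\lambda}$ so that $u_{k}=\boldsymbol{w}_{k}^{T}\boldsymbol{x}+\boldsymbol{a}_{k}^{T}\boldsymbol{\lambda}$, the componentwise chain rule gives $\nabla_{\boldsymbol{\lambda}}\sum_{k}\phi(u_{k})=\sum_{k}\phi^{\prime}(u_{k})\,\boldsymbol{a}_{k}=A\,\boldsymbol{\phi}^{\prime}(\boldsymbol{u})$, since $\partial u_{k}/\partial\boldsymbol{\lambda}=\boldsymbol{a}_{k}$ and $A=[\boldsymbol{a}_{1},\ldots,\boldsymbol{a}_{K}]$.

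The key step is then to invoke the matched-nonlinearity identity enforced throughout the paper, $\sigma(u)=\phi^{\prime}(u)$ (equivalently $u(z)=(\phi^{\prime})^{-1}(z)$), so that $\phi^{\prime}(u_{k})=\sigma(u_{k})=z_{k}$ and hence $\boldsymbol{\phi}^{\prime}(\boldsymbol{u})=\boldsymbol{\sigma}(\boldsymbol{u})=\boldsymbol{z}$. Combining the three pieces, the stationarity condition reads $\nabla_{\boldsymbol{\lambda}}E_{\mathrm{gensig}}=-\boldsymbol{b}+A\boldsymbol{z}=\boldsymbol{0}$, which is exactly $A\boldsymbol{z}=\boldsymbol{b}$. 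This is the same computation already displayed in Section~\ref{sec:constraint_satisfaction}; here it appears as the Lagrange-dual stationarity condition, i.e. primal feasibility of the predictor recovered from dual optimality of $\boldsymbol{\lambda}$.

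There is essentially no hard obstacle here: the argument is a one-line gradient computation. The only two points needing care are (i) that the optimal $\boldsymbol{\lambda}$ genuinely satisfies the first-order condition, which is guaranteed because $E_{\mathrm{gensig}}$ is convex in $\boldsymbol{\lambda}$ (the term $\boldsymbol{e}^{T}\boldsymbol{\phi}(W^{T}\boldsymbol{x}+A^{T}\boldsymbol{\lambda})$ is convex since each $\phi$ is convex and $A^{T}\boldsymbol{\lambda}$ is linear, while the remaining terms are linear), so any stationary point is a global minimizer; and (ii) correctly propagating the componentwise chain rule through the matched sigmoid, which is precisely where the identity $\sigma=\phi^{\prime}$ does all the work in converting $\boldsymbol{\phi}^{\prime}(\boldsymbol{u})$ into the nonlinear predictor $\boldsymbol{z}$.
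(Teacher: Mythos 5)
Your proof is correct and takes essentially the same route as the paper's: both differentiate the dual objective with respect to $\boldsymbol{\lambda}$, set the gradient to zero, and use the matched-nonlinearity identity $\sigma=\phi^{\prime}$ to convert $A\boldsymbol{\phi}^{\prime}(\boldsymbol{u})$ into $A\boldsymbol{z}=\boldsymbol{b}$ (the paper carries out this computation componentwise on $E_{\mathrm{gensig}}^{(\mathrm{total})}$, you do it vectorially per instance). Your additional remark that convexity of $E_{\mathrm{gensig}}$ in $\boldsymbol{\lambda}$ guarantees the optimal $\boldsymbol{\lambda}$ satisfies the first-order condition is a minor strengthening of a point the paper states without proof.
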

\begin{proof}
We differentiate (\ref{eq:E_gensig_total}) w.r.t. $\boldsymbol{\lambda}_{n}$
and set the result to $\boldsymbol{0}$ to obtain

\begin{equation}
\sum_{k=1}^{K}a_{ik}\phi^{\prime}\left(\sum_{j=1}^{J}w_{jk}x_{nj}+\sum_{i^{\prime}=1}^{I}a_{i^{\prime}k}\lambda_{ni^{\prime}}\right)=b_{ni},\,\forall i\in\left\{ 1,\ldots,I\right\} 
\end{equation}
or 

\begin{equation}
A\boldsymbol{z}_{n}=\boldsymbol{b}_{n},\,\forall n
\end{equation}
at the optimal value of the Lagrange parameter vector $\boldsymbol{\lambda}_{n}$
and

\begin{equation}
z_{nk}=\phi^{\prime}\left(\sum_{j=1}^{J}w_{jk}x_{nj}+\sum_{i=1}^{I}a_{ik}\lambda_{ni}\right)=\sigma\left(\sum_{j=1}^{J}w_{jk}x_{nj}+\sum_{i=1}^{I}a_{ik}\lambda_{ni}\right),\,\forall n,k,
\end{equation}
and we are done.
\end{proof}

\section{Applications}\label{sec:Applications}

\subsection{Synthetic Equality Constraints}\label{subsec:Synthetic-Equality-Constraints}

We now demonstrate the incorporation of equality constraints through
a simple example. To this end, we construct a feedforward network
for the XOR problem using 4 fully connected layers. The last layer
(fc4) is a linear classifier with a single output whereas the previous
layer (fc3) converts the fc2 outputs into new features which are easier
to classify than the original coordinates. Consequently, we visualize
the fc3 weights as enacting linear constraints on the fc2 outputs
in the form $A\boldsymbol{x}_{2}=\boldsymbol{b}$ where $A$ is the
set of weights corresponding to the fc3 layer with $\boldsymbol{x}_{2}$
being the output of the fc2 layer and $\boldsymbol{b}$ being the
set of final XOR features. We train this network in PyTorch using
a standard approach with 2000 training set samples in $\mathbb{R}^{2}$.
The network architecture is as follows: 10 weights in fc1 (2 units
to 5 units), 50 weights in fc2 (5 units to 10 units), 40 weights in
fc3 (10 units to 4 units) and 4 weights in fc4 (4 units to 1 unit).
Each set of weights in each layer is accompanied by bias weights as
well. After training, we extract the weights corresponding to $A$
(the weights of the fc3 layer) and store them.

Next, we set up our constraint satisfaction network. The goal of this
network is to begin with fresh inputs in $\mathbb{R}^{2}$ and map
them to the outputs of the previous network's fc2 layer ($\boldsymbol{x}_{2}$
which are the new targets $\boldsymbol{y}$) while satisfying the
constraints ($A\boldsymbol{x}_{2}=\boldsymbol{b})$. This is explained
in greater detail. Assume we have new inputs $\boldsymbol{x}_{0}^{(\mathrm{new})}$.
These are sent into the XOR network to produce the output of the fc2
layer $\boldsymbol{x}_{2}$ which we now call $\boldsymbol{y}$, the
new targets. But, we also know that the targets $\boldsymbol{y}$
satisfy additional constraints in the form of the XOR network's fc3
layer. Therefore, if we construct a new network with $\boldsymbol{y}$
being the target of the fc2 layer then the new network (henceforth
XOR Lagrange) has constraints $A\boldsymbol{x}_{2}^{(\mathrm{new})}=A\boldsymbol{y}=A\boldsymbol{x}_{2}$
which need to be enforced. Since the XOR fc3 layer maps 10 units to
4 units, there are 4 constraints for each $\boldsymbol{x}_{2}^{(\mathrm{new})}$
in XOR Lagrange. To enforce these, we set up a custom layer which
has additional fixed weights (mapping 10 units to 4 units) in the
form of $A$ and a set of Lagrange parameters (4 per training or test
pattern) which need to be obtained. We train the new XOR Lagrange
network in the same way as the XOR network but with one crucial difference.
In each epoch and for each batch, we use a constraint satisfaction
network to solve for the 4 Lagrange parameters for each pattern in
each batch. A standard gradient descent approach with a step size
solver is used. We use a threshold of $5\times10^{-3}$ for the average
change in the loss function w.r.t. $\boldsymbol{\lambda}$ (the Lagrange
parameters)---essentially a standard BCE loss---and a threshold
of $1.0\times10^{-6}$ for the step-size parameter (where the search
begins with $\alpha$ the step-size parameter set to 0.1 and then
halved each time if too high). If convergence is not achieved, we
stop after 1000 iterations of constraint satisfaction. Note that we
do not need to completely solve for constraint satisfaction in each
epoch but have chosen to do so in this first work. 

XOR Lagrange was executed 1000 times on a fresh input (with 2000 patterns)
with the results shown in Figure~\ref{fig:XOR-Lagrange}. We discovered
a very common pattern in the optimization. Initially for every fresh
input $\boldsymbol{x}_{0}^{(\mathrm{new})}$, $\boldsymbol{x}_{2}^{(\mathrm{new})}$
is far away from $\boldsymbol{y}$ and the constraint satisfaction
solver can take many iterations to obtain $A\boldsymbol{x}_{2}^{(\mathrm{new})}\approx A\boldsymbol{y}$
(and note that both $\boldsymbol{x}_{2}^{(\mathrm{new})}$ and $\boldsymbol{y}$
are the outputs of sigmoids). Then, as optimization proceeds, the
number of constraint satisfaction iterations steadily decreases until
$\boldsymbol{x}_{2}^{(\mathrm{new})}\approx\boldsymbol{y}$ (and we
always execute for 100 epochs). In the figure, we depict this progression
for the maximum, minimum and mean number of iterations. The loss function
in each epoch is also shown (in lockstep) for the sake of comparison.
While the mean number of constraint satisfaction iterations trends
lower, we do see instances of large numbers of iterations in the maximum
case. This suggests that we need to explore more sophisticated optimization
schemes in the future. 

We conclude with an initial foray into improved constraint satisfaction.
In the above set of experiments, we did not propagate the Lagrange
parameters across epochs. This causes the constraint satisfaction
subsystem to repeatedly solve for the Lagrange parameters from scratch
which is wasteful since we have a global optimization problem on both
$W$ and $\left\{ \boldsymbol{\lambda}_{n}\right\} $. To amend this,
we treat $\left\{ \boldsymbol{\lambda}_{n}\right\} $ as state variables
and initialize the constraint satisfaction search in each epoch using
the results from the previous one. The only parameter value change
from the above is a threshold of $1\times10^{-3}$ for the average
change in the loss function w.r.t. the Lagrange parameters with this
lower value chosen due to the improved constraint satisfaction subsystem.
Despite this tighter threshold, the total number of iterations for
constraint satisfaction is lowered as can be seen in Figure~\ref{fig:XOR-Lagrange_history}.
The figure also depicts the evolution toward convergence of a suitable
Lagrange parameter norm $\|\boldsymbol{\lambda}\|$.

\section{Discussion}\label{sec:Discussion}

The motivation behind the work is to endow neural networks with constraint
satisfaction capabilities. We have shown this can be accomplished
by appealing to a saddle-point principle---the difference of Bregman
divergences---with constraints added to a maximization problem which
is carefully defined on a new set of auxiliary predictor variables.
When these predictors are eliminated from the Lagrangian, we obtain
a dual minimization problem on the Lagrange parameters (one per training
set instance). The price of admission for constraint satisfaction
therefore is an expansion of the penultimate hidden layer with additional
Lagrange parameter units (and fixed weights corresponding to the constraints).
Solutions for the Lagrange parameters are required in addition to
weight learning with further optimization needed during testing. The
advantage of this formulation is that we have a single minimization
problem (as opposed to a saddle-point problem) with the unusual interpretation
of Lagrange parameters as hidden units. Immediate future work will
contend not only with linear inequality constraints (perhaps achieved
via non-negativity constraints on the Lagrange parameters) but nonlinear
equality constraints which may require modifications of the basic
framework. We envisage a host of applications stemming from this work,
driven by domain scientists who can now attempt to impose more structure
during learning using application-specific constraints. 
\begin{center}
\begin{figure}
\begin{centering}
\includegraphics[width=1\textwidth]{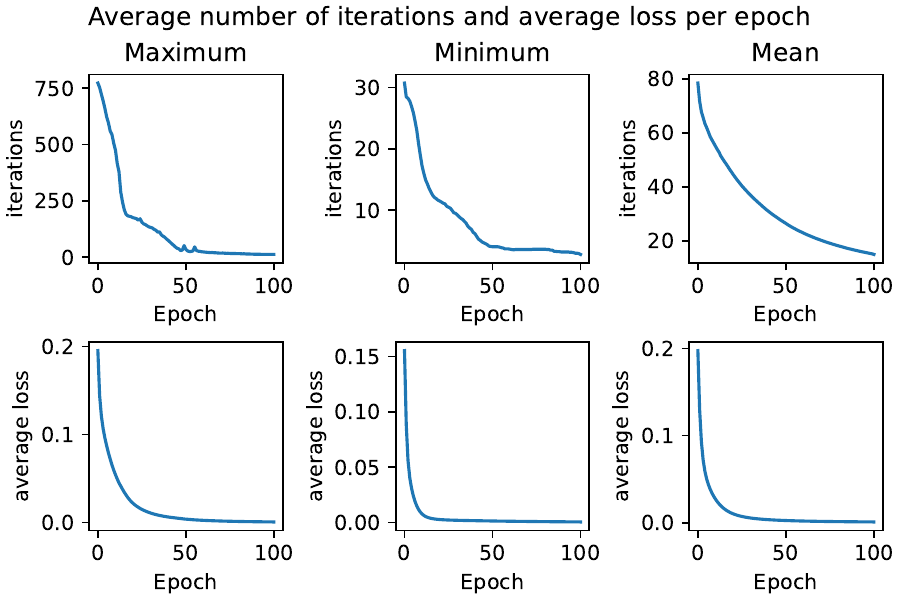}
\par\end{centering}
\caption{\textbf{XOR Lagrange}. In each subplot, we show training progress
with constraint satisfaction. In each top subplot, the constraint
satisfaction progress is depicted with its loss counterpart proceeding
in lockstep below. The maximum, minimum and mean iterations (for 1000
runs) are shown in the left, middle and right subplots respectively.}\label{fig:XOR-Lagrange}
\end{figure}
\par\end{center}

\begin{center}
\begin{figure}
\begin{centering}
\includegraphics[width=1\textwidth]{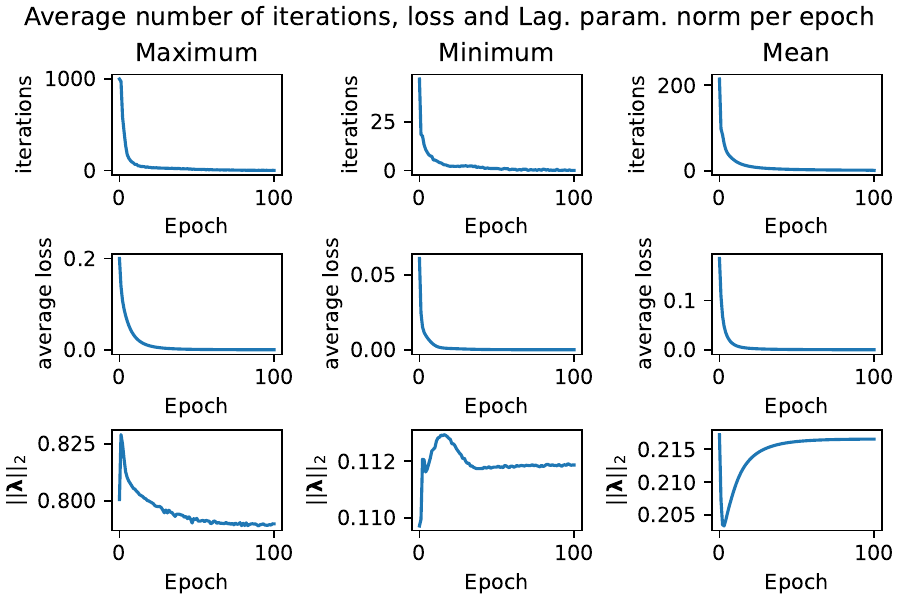}
\par\end{centering}
\caption{\textbf{XOR Lagrange with history}: In each top subplot, the constraint
satisfaction progress is depicted with its loss counterpart proceeding
in lockstep immediately below. The bottom set of plots show the evolution
of the Lagrange parameters via their norm $\|\boldsymbol{\lambda}\|$.
The maximum, minimum and mean iterations (for 1000 runs) are shown
in the left, middle and right subplots respectively.}\label{fig:XOR-Lagrange_history}
\end{figure}
\par\end{center}

\bibliographystyle{apalike}
\bibliography{constraints}

\end{document}